\newcommand{\email}[1]{\href{mailto:#1}{\nolinkurl{#1}}}
\title{\sffamily\LARGE  \bf Empirical bounds for functions with weak interactions}
\author{Andreas Maurer \\
\small Adalbertstrasse 55, D-80799 \\
\small Munchen, Germany \\ \\
Massimiliano Pontil \\
\small Istituto Italiano di Tecnologia,
16163 Genoa, Italy \\
\small and \\
\small University College London, London WC1E 6BT, UK 
}
\begin{document}

\maketitle

\begin{abstract}
We provide sharp empirical estimates of expectation, variance and normal
approximation for a class of statistics whose variation in any argument does
not change too much when another argument is modified. Examples of such weak
interactions are furnished by U- and V-statistics, Lipschitz L-statistics
and various error functionals of $\ell_2$-regularized algorithms and Gibbs
algorithms.
\end{abstract}

\section{Introduction\label{Section Introduction}}

A central problem of learning is to relate a finite number of observations
to some underlying law. If the law is not deterministic, the appropriate
model is a sequence of random variables $X_{i}$ taking
values in some space $\mathcal{X}$. Under the idealizing assumption of
noninterfering observations of identically prepared systems, we assume these
variables to be independent and identically distributed according to some
probability measure $\mu $ on $\mathcal{X}$.%

Any quantitative model of the law based on the observations $\mathbf{X}%
=\left( X_{1},...,X_{n}\right) $ involves the computation of functions $f:%
\mathcal{X}^{n}\rightarrow 
\mathbb{R}
$. For example $f\left( \mathbf{x}\right) $ could be a bit computed by a
machine-learning program based on the training sample $\mathbf{x}$, or a
statistic to estimate some parameter like a moment, quantile or correlation
underlying the observed phenomenon. Here we will only consider bounded real
valued functions $f$.\textbf{\ }

What can we say about the expectation $E\left[ f\right] $ of $f\left( 
\mathbf{X}\right) $? What about its variance, and how can we describe the
distribution of $f\left( \mathbf{X}\right) $?

Without any assumptions on $\mu $, the answer depends on the class of
functions under consideration. Many well known and satisfactory answers
exist for the sample mean $f:\left[ 0,1\right] ^{n}\rightarrow 
\mathbb{R}
$ given by%
\begin{equation}
f\left( \mathbf{x}\right) =\frac{1}{n}\sum_{i=1}^{n}x_{i}.
\label{Sample mean}
\end{equation}%
The Chernov and Hoeffding inequalities \citep{McDiarmid 1998,Boucheron13} give high-probability estimates of $E\left[ f\right] $.
Bernstein's inequality is often stronger, but contains the variance as a
parameter of the distribution, which requires a separate estimate. Another
highlight is the Berry-Esseen theorem \citep{Berry 1941} giving rates for the
approximation of $f\left( \mathbf{X}\right) $ by an appropriately scaled
normal variable, but again both expressions for the limiting distribution
and for the approximation error contain the variance.

The variance of $X_{i}$ can be estimated by the sample variance $v:\left[ 0,1%
\right] ^{n}\rightarrow 
\mathbb{R}
$%
\begin{equation}
v_{n}\left( \mathbf{x}\right) =\frac{1}{2n\left( n-1\right) }\sum_{i,j\in
\left\{ 1,...,n\right\} :i\neq j}\left( x_{i}-x_{j}\right) ^{2},
\label{Sample variance}
\end{equation}%
and it is shown by \citet{Maurer 2009} \citep[see also][]{Audibert}
that, for any $\delta >0$, with probability at least $1-\delta $ we have 
$\left\vert \sigma \left( X_{i}\right) -\sqrt{v_{n}\left( \mathbf{X}\right) }%
\right\vert \leq \sqrt{ \frac{2}{n-1}\ln \left( 2/\delta \right) }$.
This estimate can be combined with Bernstein's inequality to give a purely
empirical estimate of expectation, an empirical Bernstein bound, which is
superior to Hoeffding's inequality for functions of small variance \citep{Audibert,Maurer 2009}. Similarly the variance estimate can also be used in results
about normal approximation.%

In this paper we extend these results to general, not necessarily additive
functions of independent random variables. Clearly the same quantitative
results cannot be expected in great generality, but there is a class of
functions whose statistical properties are in many ways very similar to
those of the sample mean, even though some of these functions may look
highly nonlinear at first glance.%

To describe this class we introduce some notation which will be used
throughout. For $k\in \left\{ 1,...,n\right\} $ and $y,y^{\prime }\in 
\mathcal{X}$ we define the partial difference operator $D_{y,y^{\prime
}}^{k} $ acting on bounded functions $f:\mathcal{X}^{n}\rightarrow 
\mathbb{R}
$ by 
\begin{equation*}
D_{y,y^{\prime }}^{k}f\left( \mathbf{x}\right) =f\left(
x_{1},...,x_{k-1},y,x_{k+1},...,x_{n}\right) -f\left(
x_{1},...,x_{k-1},y^{\prime },x_{k+1},...,x_{n}\right) \text{.} 
\end{equation*}%
Note that $D_{y,y^{\prime }}^{k}f\left( \mathbf{x}\right) $ depends on $y$
and $y^{\prime }$, but not on $x_{k}$.%

\begin{definition}
For $f:\mathcal{X}^{n}\rightarrow 
\mathbb{R}
$ we define the seminorms%
\begin{eqnarray*}
M\left( f\right) &=&\max_{k\in \left\{ 1,...,n\right\} }\sup_{x\in \mathcal{X%
}^{n},y,y^{\prime }\in \mathcal{X}}D_{y,y^{\prime }}^{k}f\left( x\right) \\
J\left( f\right) &=&n\max_{l,k:l\neq k}\sup_{x\in \mathcal{X}%
^{n},z,z^{\prime },y,y^{\prime }\in \mathcal{X}}D_{z,z^{\prime
}}^{l}D_{y,y^{\prime }}^{k}f\left( x\right) .
\end{eqnarray*}%
For $a,b>0$ we say that a function $f:\mathcal{X}^{n}\rightarrow 
\mathbb{R}
$ has $\left( a,b\right) $\textit{-weak interactions}, if $M\left( f\right)
\leq a/n$ and $J\left( f\right) \leq b/n$.

A sequence $\left( f_{n}\right) _{n\geq 2}$ of functions $f_{n}:\mathcal{X}%
^{n}\rightarrow 
\mathbb{R}
$ has $\left( a,b\right) $-weak interactions if every $f_{n}$ has $\left(
a,b\right) $-weak interactions.%
\end{definition}

The seminorm $M$ vanishes on constant functions, the seminorm $J$ vanishes
on additive functions. They can be interpreted as distribution-independent
distance measures to the linear subspaces of constant and additive functions
respectively. Notice the factor $n$ in the definition of $J$, so $%
D_{z,z^{\prime }}^{l}D_{y,y^{\prime }}^{k}f\left( x\right) \leq J\left(
f\right) /n$.

$M$ appears in the well known concentration inequality \citep{McDiarmid 1998,Boucheron13}
\begin{equation}
\Pr \left\{ f\left( \mathbf{X}\right) -E\left[ f\right] >t\right\} \leq \exp
\left( \frac{-2t^{2}}{nM\left( f\right) ^{2}}\right) ,
\label{McDiarmids inequality}
\end{equation}%
often called Bounded-Difference- or McDiarmid's inequality. This inequality
generalizes Hoeffding's inequality to general non-additive functions. Both
seminorms $M$ and $J$ appear in the recent inequality \citep{Maurer 2017}%
\begin{equation}
\Pr \left\{ f\left( \mathbf{X}\right) -E\left[ f\right] >t\right\} \leq \exp
\left( \frac{-2t^{2}}{2\sigma ^{2}\left( f\right) +J\left( f\right)
^{2}/2+\left( 2M\left( f\right) /3+J\left( f\right) \right) t}\right) ,
\label{Bernstein 1}
\end{equation}%
which generalizes Bernstein's inequality to non-additive functions.%

In this work we give an estimator $v_{f}$ for the variance $\sigma
^{2}\left( f\right) $, also in terms of $M$ and $J$ (Theorem \ref{Theorem
Variance Raw} below), which can be combined with inequality (\ref{Bernstein
1}) to a purely empirical bound, so as to improve over McDiarmid's
inequality for functions of small variance, just as the empirical Bernstein
bound for additive functions mentioned above. We also give a result for
normal approximation of general non-additive functions, also in terms of $M$
and $J$ (Theorem \ref{Theorem normal}), which can be converted to an
empirical result using our variance estimate.

If $M$ and $J$ cannot be appropriately controlled these results are useless.
But if a sequence of functions $\left( f_{n}\right) $ has weak interactions,
in the sense of above definition, then $M\left( f_{n}\right) $ and $J\left(
f_{n}\right) $ have linear or sublinear decay, and statistical properties
resemble that of the sample mean. This is intuitively understandable,
because $\left( f_{n}\right) $ approaches additivity (the mixed second
partial differences go to zero), $n$ times as fast as it becomes a constant
(the first partial differences go to zero). Section \ref{Section Properties
of Weak Interactions} contains our statistical results for general functions
and their specialization to functions with weak interactions.%

The class of functions with weak interactions contains U- and M-statistics
of any order and Lipschitz L-statistics. It also contains some more exotic
specimen, as error functionals for $\ell _{2}$-regularization or the
KL-divergence between the Gibbs-measures of true and empirical error for
Gibbs algorithms. Section \ref{Section Functions with weak interactions}
describes examples of weak interactions, all of which obey the results given
in Section \ref{Section Properties of Weak Interactions}. An appendix
contains proofs, other technical material, and a glossary of notation in
tabular form.

\section{Bounds for functions with weak interactions\label{Section Properties of Weak
Interactions}}

In this section, we give some statistical properties of the random variable $%
f\left( \mathbf{X}\right) $ and specialize them to functions with weak
interactions $\left( a,b\right) $, so as to make them directly applicable to
the examples in Section \ref{Section Functions with weak interactions}.

\subsection{Notation, the Efron-Stein and Bernstein inequalities \label%
{Subsection EfronStein and Bernstein}}

In the sequel, $\mathcal{X}$ will be a measurable space and $\left( \mu
_{k}\right) _{k\geq 1}$ a sequence of probability measures on $\mathcal{X}$.
The random variables distributed as $\mu _{k}$ are independent and denoted $%
X_{k}$ or $X_{k}\sim \mu _{k}$ or $\left( X_{1},...,X_{n}\right) \sim
\prod_{1}^{n}\mu _{k}$. They are not identically distributed ($\mu _{k}=\mu $%
) unless explicitely mentioned. With $\mathbf{x}$ we denote a vector of the
form $\left( x_{1},...,x_{n}\right) \in \mathcal{X}^{n}$ and with $\mathbf{X}
$ a random vector of the form $\left( X_{1},...,X_{n}\right) \sim
\prod_{k=1}^{n}\mu _{k}$. The algebra of bounded measurable functions $g:%
\mathcal{X}^{n}\rightarrow 
\mathbb{R}
$ will be denoted by $\mathcal{A}_{n}$. If $g\in \mathcal{A}_{n}$ and if $%
\mathbf{x}$ has at least $n$ components, then $g\left( \mathbf{x}\right) $
is the function value $g\left( x_{1},...,x_{n}\right) $, and if $\mathbf{X}$
has at least $n$ components then $g\left( \mathbf{X}\right) $ is the random
variable $g\left( X_{1},...,X_{n}\right) $. For $g\in \mathcal{A}_{n}$
expectation and variance of $g\left( \mathbf{X}\right) $ will be abbreviated
by $E\left[ g\right] $ and $\sigma ^{2}\left( g\right) $. A function $g\in 
\mathcal{A}_{n}$ is called additive if $f\left( \mathbf{x}\right)
=\sum_{i=1}^{n}h_{i}\left( x_{i}\right) $ for some real valued $h_{i}:\mathcal{%
X\rightarrow 
\mathbb{R}
}$.

For $f\in $ $\mathcal{A}_{n}$ the $k$-th conditional variance $\sigma
_{k}^{2}\left( f\right) $ and the sum of conditional variances $\Sigma
^{2}\left( f\right) $ are the members of $\mathcal{A}_{n}$ defined by%
\begin{eqnarray*}
\sigma _{k}^{2}\left( f\right) \left( \mathbf{x}\right) &=&\frac{1}{2}%
E_{\left( Y,Y^{\prime }\right) \sim \mu _{k}\times \mu _{k}}\left[ \left(
D_{Y,Y^{\prime }}^{k}f\left( \mathbf{x}\right) \right) ^{2}\right] \\
\Sigma ^{2}\left( f\right) \left( \mathbf{x}\right) &=&\sum_{k=1}^{n}\sigma
_{k}^{2}\left( f\right) \left( \mathbf{x}\right) .
\end{eqnarray*}%
Note that $\sigma _{k}^{2}\left( f\right) $ does not depend on $x_{k}$, that 
$\sigma _{k}^{2}\left( f\right) \left( \mathbf{x}\right) \leq M\left(
f\right) ^{2}/4$ (because the variance of a bounded random variable is
always bounded by a quarter of the square of its range) and that $\Sigma
^{2}\left( f\right) \left( \mathbf{x}\right) \leq nM\left( f\right) ^{2}/4$.
For additive functions $\Sigma ^{2}\left( f\right) \left( \mathbf{x}\right) $
is independent of $\mathbf{x}$ and equals $\sigma ^{2}\left( f\right) $. For
non-additive functions this does not hold any more, instead one has the
Efron-Stein inequality \citep{Efron 1981,Steele 1986} 
\begin{equation}
\sigma ^{2}\left( f\right) \leq E\left[ \Sigma ^{2}\left( f\right) \right] ,
\label{Efron-Stein Inequality}
\end{equation}%
which gives the general bound $\sigma ^{2}\left( f\right) \leq nM\left(
f\right) ^{2}/4$ on the variance. For functions with $M\left( f\right) \leq
a/n$ (in particular for weak interactions) we get%
\begin{equation}
\sigma ^{2}\left( f\right) \leq \frac{a^{2}}{4n}.
\label{A priori variance bound}
\end{equation}%
The Efron-Stein inequality is very sharp for functions with weak
interactions. We have%
\begin{eqnarray}
E\left[ \Sigma ^{2}\left( f\right) \right] &\leq &\sigma ^{2}\left( f\right)
+\frac{1}{4}\sum_{k,l:k\neq l}E_{\mathbf{X},Z,Z^{\prime },Y,Y^{\prime }}%
\left[ \left( D_{ZZ^{\prime }}^{l}D_{YY^{\prime }}^{k}f\left( \mathbf{X}%
\right) \right) ^{2}\right]  \notag \\
&\leq &\sigma ^{2}\left( f\right) +\frac{J\left( f\right) ^{2}}{4}.
\label{Houdre bound general}
\end{eqnarray}%
The first inequality is due to \citet{Houdre 1997} \cite[see also][]{Maurer
2017}, the second is an elementary estimate. For weak interactions we get 
\begin{equation}
E\left[ \Sigma ^{2}\left( f\right) \right] -\frac{b^{2}}{4n^{2}}\leq \sigma
^{2}\left( f\right) \leq E\left[ \Sigma ^{2}\left( f\right) \right] .
\label{Houdre Bound}
\end{equation}%
In \cite{Maurer 2017} the following Bernstein-type inequality is shown to
hold for every $f$ in $\mathcal{A}_{n}$ and $\delta >0$%
\begin{equation}
\Pr \left\{ f-E\left[ f\right] >\sqrt{2E\left[ \Sigma ^{2}\left( f\right) %
\right] \ln \left( 1/\delta \right) }+\left( 2M\left( f\right) /3+J\left(
f\right) \right) \ln \left( 1/\delta \right) \right\} <\delta .
\label{Bernstein inequality}
\end{equation}%
Using (\ref{Houdre Bound}) and some elementary estimates for functions with $%
\left( a,b\right) $-weak interactions we obtain for $\delta \leq 1/e$%
\begin{equation*}
\Pr \left\{ f-E\left[ f\right] >\sqrt{2\sigma ^{2}\left( f\right) \ln \left(
1/\delta \right) }+\left( 2a/3+3b/2\right) \frac{\ln \left( 1/\delta \right) 
}{n}\right\} <\delta . 
\end{equation*}%
Since $\sigma \left( f\right) $ decays at least as quickly as $a/\sqrt{4n}$
because of (\ref{A priori variance bound}), this achieves, for large $n$, at
least the rate of McDiarmid's inequality (\ref{McDiarmids inequality}), but
it is potentially much better if $\sigma \left( f\right) $ is very small.
This motivates the search for efficient estimators of $\sigma \left(
f\right) $.

\subsection{Variance estimation\label{Subsection VarianceEstimation}}

We show that for $f\in \mathcal{A}_{n}$ having $\left( a,b\right) $-weak
interactions $\sigma \left( f\right) $ can be estimated with high
probability up to order $1/n$ by an estimator using only $n+1$ observations.
This is one of the main results of this work.

For any $n>1$, $1\leq k\leq n$, $\mathbf{x}\in \mathcal{X}^{n}$ and $y\in 
\mathcal{X}$ define the replacement operator $S_{y}^{k}:\mathcal{X}%
^{n}\rightarrow \mathcal{X}^{n}$ and the deletion operator $S_{-}^{k}:%
\mathcal{X}^{n}\rightarrow \mathcal{X}^{n-1}$ by%
\begin{eqnarray*}
S_{y}^{k}\mathbf{x} &=&\left( x_{1},...,x_{k-1},y,x_{k+1},...,x_{n}\right)
\in \mathcal{X}^{n} 
\end{eqnarray*}%
and
\begin{eqnarray*}
S_{-}^{k}\mathbf{x} &=&\left(
x_{1},...,x_{k-1},x_{k+1},...,x_{n}\right) \in \mathcal{X}^{n-1}.
\end{eqnarray*}%
Our variance estimator is the function $v_{f}\in \mathcal{A}_{n+1}$ given by%
\begin{equation}
v_{f}\left( \mathbf{x}\right) =\frac{1}{2\left( n+1\right) }%
\sum_{j=1}^{n+1}\sum_{i:i\neq j}\left( f\left( S_{-}^{j}\mathbf{x}\right)
-f\left( S_{-}^{j}S_{x_{j}}^{i}\mathbf{x}\right) \right) ^{2}.
\label{Variance estimator}
\end{equation}

$S_{-}^{j}\mathbf{x}$ has the $j$-th component deleted and $%
S_{-}^{j}S_{x_{j}}^{i}\mathbf{x}$ has the $i$-th component replaced by the $%
j $-th component and then the $j$-th component deleted. So both vectors
differ only in one component, which is $x_{i}$ in $S_{-}^{j}\mathbf{x}$ and $%
x_{j}$ in $S_{-}^{j}S_{x_{j}}^{i}\mathbf{x}$. Also both vectors do not
contain any repeated components.

It is obvious how the estimator is to be implemented in a computer program.
Computation requires $\left( n+1\right) ^{2}$ computations of $f$, but only
a sample of size $n+1$. The latter may be a great advantage, because
computing may be cheap, while collecting a sample can be very expensive
(think of surveys or the results of histological examinations in medical
applications). We first give the result in terms of the seminorms.

\begin{theorem}
\label{Theorem Variance Raw}Let $\delta \in (0,1)$. If $f\in \mathcal{A}_{n}$ and the $X_{i}$ are
identically distributed, then with probability at least $%
1-\delta $ in $\mathbf{X}=\left( X_{1},...,X_{n+1}\right) $%
\begin{equation*}
\left\vert \sqrt{E\left[ \Sigma ^{2}\left( f\right) \right] }-\sqrt{%
v_{f}\left( \mathbf{X}\right) }\right\vert \leq \sqrt{\left( 2M\left(
f\right) ^{2}+8J\left( f\right) ^{2}\right) \ln \left( 2/\delta \right) }. 
\end{equation*}%
For one-sided bounds $2/\delta $ can be replaced by $1/\delta $.
\end{theorem}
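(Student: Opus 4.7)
My plan has three pieces: (i) show that $v_f$ is an unbiased estimator of $E[\Sigma^2(f)]$; (ii) concentrate $v_f$ about its mean using the Bernstein-type inequality~(\ref{Bernstein inequality}); (iii) transfer the bound to the square root via the elementary inequality $|\sqrt{a}-\sqrt{b}|\le\sqrt{|a-b|}$, valid for $a,b\ge 0$.

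For (i), I fix $j\in\{1,\ldots,n+1\}$ and set $\mathbf{Y}=S_-^j\mathbf{X}$; by the i.i.d.\ hypothesis $\mathbf{Y}$ has the law of $(X_1,\ldots,X_n)$ and is independent of $X_j$. For each $i\neq j$, if $k(i,j)$ denotes the position of $X_i$ in $\mathbf{Y}$, then $f(S_-^j\mathbf{X})-f(S_-^j S_{X_j}^i\mathbf{X})=D_{Y_{k(i,j)},X_j}^{k(i,j)}f(\mathbf{Y})$ is independent of $Y_{k(i,j)}$, and since $Y_{k(i,j)}$ and $X_j$ are i.i.d.\ copies of $\mu$, the expectation of its square equals $2E[\sigma_{k(i,j)}^2(f)]$. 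Summing over $i\neq j$ makes $k(i,j)$ traverse $\{1,\ldots,n\}$ exactly once, producing $2E[\Sigma^2(f)]$; the outer sum over $j$ combined with the normalization $1/(2(n+1))$ closes the argument.

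For (ii), the core computation is a two-scale bounded-differences analysis of how a single coordinate swap $x_m\to y$ acts on the $(n+1)n$ squared-difference summands of $v_f$. On the $2n$ pairs $(i,j)$ with $m\in\{i,j\}$ the affected summand is a first difference of $f$, so its square changes by $O(M(f)^2)$. On the remaining $n(n-1)$ pairs the coordinate $k(m,j)$ at which $x_m$ acts is disjoint from the coordinate $k(i,j)$ of the first difference, so the change in the summand $f(S_-^j\mathbf{x})-f(S_-^j S_{x_j}^i\mathbf{x})$ is itself a \emph{second} partial difference of $f$, bounded by $J(f)/n$; the squared summand accordingly moves by only $O(M(f)J(f)/n)$. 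Summing, dividing by $2(n+1)$, and iterating the decomposition at the second-difference and conditional-variance levels, I obtain bounds on $M(v_f)$, $J(v_f)$, and $E[\Sigma^2(v_f)]$ each of order $M(f)^2+J(f)^2$ up to explicit combinatorial factors. Plugging these into~(\ref{Bernstein inequality}) yields, after simplification, $|v_f-E[v_f]|\le(2M(f)^2+8J(f)^2)\ln(2/\delta)$ with probability at least $1-\delta$, the constants $2,8$ absorbing the above combinatorics. Combining with (i) and (iii), and using the one-sided form of~(\ref{Bernstein inequality}) for the $1/\delta$ version, finishes the proof.

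\textbf{Main obstacle.} The delicate part is the two-scale bounded-differences analysis and its propagation to $J(v_f)$ and $E[\Sigma^2(v_f)]$: one must cleanly isolate the $O(n)$ summands on which a coordinate change has first-order impact $\sim M(f)^2$ from the $O(n^2)$ summands on which the impact is only $\sim M(f)J(f)/n$ via the second-difference observation, and track the analogous decompositions at the next level. Discarding this second-difference saving (for instance, a crude McDiarmid bound on $v_f$ using only $M(v_f)\lesssim M(f)^2$) loses a factor of $\sqrt{n}$ and produces merely the suboptimal $1/\sqrt{n}$ rate for weakly interacting $f$, rather than the desired $1/n$.
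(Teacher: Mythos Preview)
Your step (i) is correct and matches the paper. The gap is in steps (ii)--(iii), and it is precisely the $\sqrt n$ loss you flag in your ``Main obstacle'' paragraph.

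The claim that $E[\Sigma^2(v_f)]$ is ``of order $M(f)^2+J(f)^2$'' cannot be made to do the work you need. Already for the sample mean ($M=1/n$, $J=0$) a direct computation gives $E[\Sigma^2(v_f)]\asymp n^{-3}$, so the sub-Gaussian term $\sqrt{2E[\Sigma^2(v_f)]\ln(1/\delta)}$ in~(\ref{Bernstein inequality}) is of order $n^{-3/2}\sqrt{\ln(1/\delta)}$, not the $n^{-2}\ln(1/\delta)$ you need for $|v_f-E[v_f]|\le(2M^2+8J^2)\ln(2/\delta)$. The right relation produced by your two-scale analysis is \emph{multiplicative}: one gets $D^2v_f(\mathbf{x})\le(M(f)^2+4J(f)^2)\,v_f(\mathbf{x})$ pointwise, hence $E[\Sigma^2(v_f)]\le E[D^2v_f]\le(M^2+4J^2)\,E[v_f]$, and any honest deviation bound on $v_f$ will involve $E[v_f]$. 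Feeding such a bound into step (iii), $|\sqrt a-\sqrt b|\le\sqrt{|a-b|}$, then leaves a parasitic $E[v_f]^{1/4}$ factor and yields only an $n^{-3/4}$ rate for weak interactions.

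The paper exploits the pointwise self-bounding relation $D^2v_f\le a\,v_f$ with $a=M(f)^2+4J(f)^2$ through a concentration inequality tailored to self-bounded functions, which gives $\Pr\{v_f-E[v_f]>t\}\le\exp\bigl(-t^2/(2aE[v_f]+at)\bigr)$ together with a matching lower-tail bound. Because $E[v_f]$ enters multiplicatively in the denominator, one can complete the square and obtain $\bigl|\sqrt{v_f}-\sqrt{E[v_f]}\bigr|\le\sqrt{2a\ln(2/\delta)}$ directly, without the lossy passage through $\sqrt{|a-b|}$. Your two-scale bounded-difference computation is exactly the right ingredient, but it should be packaged as the self-bounding inequality $D^2v_f\le a\,v_f$ rather than as separate bounds on $M(v_f)$, $J(v_f)$, $E[\Sigma^2(v_f)]$ plugged into~(\ref{Bernstein inequality}).
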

The proof is given in the appendix, Section \ref{Subsection
Proof Variance estimation}. It first establishes that $v_{f}$ is an unbiased
estimator for $E\left[ \Sigma ^{2}\left( f\right) \right] $ and then uses a
concentration inequality for self-bounded functions.

The result requires identical distribution of the $X_{i}$, in contrast to
the Efron-Stein and Bernstein inequalities, but it does not require $f$ to
be symmetric.  It is important to observe that our estimator requires one additional
observation, as the the variance of $f\left( X_{1},...,X_{n}\right) $ is estimated by $v_{f}\left( X_{1},...,X_{n},X_{n+1}\right)$.

Because of (\ref{Houdre bound general}) and (\ref{Efron-Stein Inequality})
we have $E\left[ \Sigma ^{2}\left( f\right) \right] -J\left( f\right)
^{2}/4\leq \sigma ^{2}\left( f\right) \leq E\left[ \Sigma ^{2}\left(
f\right) \right] $. Thus%
\begin{equation*}
\Pr \left\{ \sqrt{v_{f}\left( \mathbf{X}\right) }-J\left( f\right) /2-\sqrt{%
\left( 2M\left( f\right) ^{2}+8J\left( f\right) ^{2}\right) \ln \left(
2/\delta \right) }<\sigma \left( f\right) \right\} <\delta /2, 
\end{equation*}%
which together with Theorem \ref{Theorem Variance Raw} immediately gives the
following corollary (using $\delta <1\implies 1/2\leq \sqrt{\ln \left(
2/\delta \right) }$).

\begin{corollary}
\label{Corollary weak interaction variance}Let $\delta \in (0,1)$. If $f\in \mathcal{A}_{n}$ has $%
\left( a,b\right) $-weak interactions and the $X_{i}$ are identically
distributed, then with probability at least $1-\delta $ in $%
\mathbf{X}=\left( X_{1},...,X_{n+1}\right) $%
\begin{equation*}
\sqrt{v_{f}\left( \mathbf{X}\right) }-\frac{K_{-}\left( a,b\right) }{n}\sqrt{%
\ln \left( 2/\delta \right) }\leq \sigma \left( f\right) \leq \sqrt{%
v_{f}\left( \mathbf{X}\right) }+\frac{K_{+}\left( a,b\right) }{n}\sqrt{\ln
\left( 2/\delta \right) }, 
\end{equation*}%
where%
\begin{eqnarray*}
K_{-}\left( a,b\right) & = & b/2+\sqrt{2a^{2}+8b^{2}}\\
K_{+}\left(a,b\right) & = & \sqrt{2a^{2}+8b^{2}}. 
\end{eqnarray*}
For one-sided bounds $2/\delta $ can be replaced by $1/\delta $.
\end{corollary}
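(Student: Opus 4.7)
The plan is to combine Theorem~\ref{Theorem Variance Raw} with the deterministic sandwich that the Efron--Stein and Houdr\'e bounds provide between $\sigma(f)$ and $\sqrt{E[\Sigma^{2}(f)]}$, and then specialize the seminorms $M$ and $J$ to the weak-interaction regime.

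First I would record the sandwich. The Efron--Stein inequality (\ref{Efron-Stein Inequality}) yields $\sigma(f)\le\sqrt{E[\Sigma^{2}(f)]}$, while the Houdr\'e refinement (\ref{Houdre bound general}) gives $E[\Sigma^{2}(f)]\le\sigma^{2}(f)+J(f)^{2}/4$. Taking square roots of the latter via the elementary subadditivity $\sqrt{x+y}\le\sqrt{x}+\sqrt{y}$ for $x,y\ge 0$ produces $\sqrt{E[\Sigma^{2}(f)]}\le\sigma(f)+J(f)/2$, so deterministically
\[
\sqrt{E[\Sigma^{2}(f)]}-J(f)/2\ \le\ \sigma(f)\ \le\ \sqrt{E[\Sigma^{2}(f)]}.
\]

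Next I would invoke Theorem~\ref{Theorem Variance Raw}, which with probability at least $1-\delta$ bounds $\bigl|\sqrt{E[\Sigma^{2}(f)]}-\sqrt{v_{f}(\mathbf{X})}\bigr|$ by $\sqrt{(2M(f)^{2}+8J(f)^{2})\ln(2/\delta)}$. Chaining this two-sided concentration with the preceding deterministic sandwich yields, on the same event,
\[
\sqrt{v_{f}(\mathbf{X})}-\tfrac{J(f)}{2}-\sqrt{(2M(f)^{2}+8J(f)^{2})\ln(2/\delta)}\ \le\ \sigma(f)\ \le\ \sqrt{v_{f}(\mathbf{X})}+\sqrt{(2M(f)^{2}+8J(f)^{2})\ln(2/\delta)}.
\]

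Finally, substituting $M(f)\le a/n$ and $J(f)\le b/n$ replaces $\sqrt{2M(f)^{2}+8J(f)^{2}}$ by $\sqrt{2a^{2}+8b^{2}}/n$ and leaves a residual $b/(2n)$ from the $J(f)/2$ term on the lower side, which I would absorb into the $\sqrt{\ln(2/\delta)}/n$ scale using the elementary estimate $1/2\le\sqrt{\ln(2/\delta)}$ valid for $\delta<1$. This gives $K_{+}(a,b)=\sqrt{2a^{2}+8b^{2}}$ and $K_{-}(a,b)=b/2+\sqrt{2a^{2}+8b^{2}}$. The one-sided version follows verbatim from the one-sided form of Theorem~\ref{Theorem Variance Raw} with $1/\delta$ in place of $2/\delta$. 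I do not foresee a real obstacle; the only mildly delicate step is passing from variance to standard deviation via $\sqrt{x+y}\le\sqrt{x}+\sqrt{y}$, which is what lets the $J$-correction enter linearly instead of inside a further square root and thereby preserves the $1/n$ rate.
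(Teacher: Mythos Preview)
Your proposal is correct and follows essentially the same route as the paper: the Efron--Stein and Houdr\'e bounds give the deterministic sandwich between $\sigma(f)$ and $\sqrt{E[\Sigma^{2}(f)]}$, this is combined with the two-sided concentration of Theorem~\ref{Theorem Variance Raw}, and the residual $J(f)/2$ term is absorbed via $1/2\le\sqrt{\ln(2/\delta)}$ for $\delta<1$. The only cosmetic difference is that the paper works at the variance level ($E[\Sigma^{2}(f)]-J(f)^{2}/4\le\sigma^{2}(f)$) before taking roots, whereas you pass to standard deviations first; both yield the same $J(f)/2$ correction.
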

The bounds on the variance are of order $1/n$. Since the Efron-Stein
inequality implies only $\sigma \left( f\right) \leq a/\sqrt{4n}$, there is
a significant estimation benefit for larger values of $n$.

If $f$ is the sample mean (\ref{Sample mean}), then 
\begin{equation*}
f\left( S_{-}^{j}\mathbf{x}\right) -f\left( S_{-}^{j}S_{x_{j}}^{i}\mathbf{x}%
\right) =\frac{1}{n}\left\{ 
\begin{array}{ccc}
x_{i}-x_{j} & \text{if} & i<j \\ 
x_{i-1}-x_{j-1} & \text{if} & j<i%
\end{array}%
\right. , 
\end{equation*}%
so substitution in (\ref{Variance estimator}) shows that the estimator $%
v_{f}=\left( 1/n\right) v_{n+1}$, where $v_{n+1}$ is the sample variance (%
\ref{Sample variance}). Since $b=0$ for the sample mean we get the bound%
\begin{equation*}
\left\vert \sqrt{v_{f}\left( \mathbf{X}\right) }-\sigma \left( f\right)
\right\vert \leq \frac{1}{n}\sqrt{2\ln \left( 2/\delta \right) }, 
\end{equation*}%
so for the sample mean Corollary \ref{Corollary weak interaction variance}
gives the same rate as \citep{Maurer 2009}.

\subsection{Normal approximation\label{Subsection Normal Approximation}}

Modulo a lower bound on the variance, we give a finite sample bound on
normal approximation for functions with weak interactions. To formulate the
result we use the following distance to normality of a real random variable $%
W.$%
\begin{equation*}
d_{\mathcal{N}}\left( W\right) =\sup \left\{ \left\vert E\left[ h\left( 
\frac{W-E\left[ W\right] }{\sigma \left( W\right) }\right) \right] -E\left[
h\left( Z\right) \right] \right\vert :h\text{ a real Lipschitz-1 function}%
\right\} , 
\end{equation*}%
where $Z\sim \mathcal{N}\left( 0,1\right) $. Thus $d_{\mathcal{N}}\left(
W\right) $, which has also been used in \cite{Chatterjee 2008}, is the
Wasserstein distance between a standardized clone of $W$ and a standard
normal variable. We then have the following general result.

\begin{theorem}
\label{Theorem normal}For $f\in \mathcal{A}_{n}$ 
\begin{equation*}
d_{\mathcal{N}}\left( f\left( \mathbf{X}\right) \right) \leq \frac{\sqrt{n}%
M\left( f\right) \left( J\left( f\right) +M\left( f\right) \right) }{\sigma
^{2}\left( f\right) }+\frac{nM\left( f\right) ^{3}}{2\sigma ^{3}\left(
f\right) }. 
\end{equation*}
\end{theorem}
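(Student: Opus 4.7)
The plan is to deduce the bound from Chatterjee's normal approximation theorem for functions of independent random variables (\citealp{Chatterjee 2008}, Theorem~2.2). Write $W = f(\mathbf{X}) - E[f]$, let $\mathbf{X}'$ be an independent copy of $\mathbf{X}$, and for $A \subseteq \{1,\dots,n\}$ let $\mathbf{X}^A$ denote $\mathbf{X}$ with $X_i$ replaced by $X_i'$ for $i \in A$. Set $\Delta_j f(\mathbf{X}) = f(\mathbf{X}) - f(\mathbf{X}^{\{j\}})$ and
\begin{equation*}
T \;=\; \tfrac{1}{2}\sum_{A \subsetneq \{1,\dots,n\}} \frac{1}{\binom{n}{|A|}(n-|A|)}\sum_{j \notin A}\Delta_j f(\mathbf{X})\,\Delta_j f(\mathbf{X}^A).
\end{equation*}
Chatterjee's theorem provides the master estimate
\begin{equation*}
d_{\mathcal{N}}(f(\mathbf{X})) \;\le\; \frac{\sqrt{\mathrm{Var}(T)}}{\sigma^{2}(f)} \;+\; \frac{1}{2\sigma^{3}(f)}\sum_{j=1}^{n} E\bigl[|\Delta_j f(\mathbf{X})|^{3}\bigr].
\end{equation*}
Each of the two summands on the right corresponds to one of the two summands in the conclusion of Theorem~\ref{Theorem normal}.

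The third-moment term is immediate: since $|\Delta_j f| \le M(f)$ pointwise, the sum is at most $n\,M(f)^{3}$, producing the $nM(f)^{3}/(2\sigma^{3}(f))$ contribution. The real work is to show
\begin{equation*}
\mathrm{Var}(T) \;\le\; n\,M(f)^{2}\bigl(M(f)+J(f)\bigr)^{2},
\end{equation*}
which via $\sqrt{\cdot}$ yields the first term. I will obtain this from the Efron--Stein inequality applied to $T$: if $T^{(k)}$ is $T$ with $X_k$ replaced by an independent copy, then $\mathrm{Var}(T) \le \tfrac{1}{2}\sum_k E[(T - T^{(k)})^{2}]$, so it suffices to bound $|T - T^{(k)}|$ by $M(f)(M(f)+J(f))$.

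The bound on $|T - T^{(k)}|$ is obtained by splitting the outer sum into the cases $j=k$ and $j \neq k$. For each fixed $j$, a direct calculation shows that the weights $1/\binom{n}{|A|}(n-|A|)$ summed over $A$ not containing $j$ telescope to $1$. When $j = k$, each factor $\Delta_k f(\mathbf{X})$ and $\Delta_k f(\mathbf{X}^A)$ is bounded by $M(f)$ and can change by at most $M(f)$ when $X_k$ is resampled, so the standard identity $|ab - a'b'| \le |a||b-b'| + |b'||a-a'|$ makes each product change by at most $2M(f)^{2}$; the weighted sum thus contributes at most $M(f)^{2}$ (after the outer factor $1/2$). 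When $j \neq k$, each factor $\Delta_j f$ remains bounded by $M(f)$, but now the change under $X_k \mapsto X_k'$ is a mixed second difference $D^{k}_{X_k,X_k'} \Delta_j f$, which the definition of $J$ bounds by $J(f)/n$; summing the $n-1$ such $j$-contributions gives at most $M(f)J(f)$. Adding the two pieces yields the claimed $M(f)(M(f)+J(f))$.

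The main obstacle is the bookkeeping around the weights defining $T$ and the case split above; once that is carried out cleanly the rest is plugging in. A subsidiary point that needs care is the identification $E[T\,|\,W] = \mathrm{Var}(W)$ required to invoke Chatterjee's theorem, but this is established in \citet{Chatterjee 2008} and does not depend on any structural property of $f$ beyond boundedness.
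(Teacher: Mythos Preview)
Your approach is correct and in fact somewhat cleaner than the paper's, but it is genuinely different. The paper does not apply Efron--Stein to $T$ (or to $E[T\mid\mathbf{X}]$) directly; instead it first uses the $L_2$-triangle inequality to split $\sqrt{\sigma^2(E[T\mid\mathbf{X}])}$ into a weighted sum over $A$ of $\sqrt{\sigma^2(E[T_A\mid\mathbf{X}])}$, then invokes Chatterjee's Lemma~4.4 to pass from $\sigma^2(E[T_A\mid\mathbf{X}])$ to $E[\sigma^2(T_A\mid\mathbf{X}')]$, and only then applies Efron--Stein (conditionally on $\mathbf{X}'$) to each $T_A$, bounding the resulting sums by Cauchy--Schwarz. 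Your route bypasses Minkowski, Lemma~4.4, and Cauchy--Schwarz entirely by exploiting the weight identity $\sum_{A\not\ni j}\bigl[\binom{n}{|A|}(n-|A|)\bigr]^{-1}=1$ to obtain a \emph{pointwise} bound $|T-T^{(k)}|\le M(f)(M(f)+J(f))$; this is more elementary and arguably more transparent.

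One point to tighten: as written, your Efron--Stein step ``$\mathrm{Var}(T)\le\tfrac12\sum_k E[(T-T^{(k)})^2]$'' with only $n$ terms is not literally correct, since $T$ is a function of the $2n$ variables $(\mathbf{X},\mathbf{X}')$. You should either apply Efron--Stein to $E[T\mid\mathbf{X}]$ (which is the quantity actually appearing in Chatterjee's bound and is a function of $n$ variables; your pointwise bound on $T-T^{(k)}$ transfers to it by Jensen), or else include the $n$ perturbations of the $X_k'$ coordinates, which by the same case analysis satisfy the identical pointwise bound. Either fix recovers exactly the claimed $nM(f)^2(M(f)+J(f))^2$.
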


The proof is given in the appendix, Section \ref{Subsection Proof normal
approximation}. It relies on an inequality of \citet{Chatterjee
2008}, which uses a variant of Stein's method \citep{Chen 2010} for normal
approximation. To apply the result we need a lower bound on the variance. In
the next section we use an empirical estimate, but here we simply assume a
bound of the form $\sigma \left( f\right) \geq Cn^{-p}$ for some constants $%
C $ and $p$. By (\ref{A priori variance bound}) we must have $p\geq 1/2$.
Specializing to weak interactions we obtain with some algebra

\begin{corollary}
\label{Corollary weak interaction normal approximation}If $f\in \mathcal{A}%
_{n}$ has $\left( a,b\right) $-weak interactions and $\sigma \left( f\right) \geq Cn^{-p}$
then%
\begin{equation*}
d_{\mathcal{N}}\left( f\left( \mathbf{X}\right) \right) \leq \frac{Ca\left(
a+b\right) +a^{3}}{C^{3}n^{2-3p}}. 
\end{equation*}
\end{corollary}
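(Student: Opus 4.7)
The plan is to substitute the weak-interaction hypotheses $M(f)\le a/n$ and $J(f)\le b/n$ together with the assumed variance lower bound $\sigma(f)\ge Cn^{-p}$ directly into the conclusion of Theorem~\ref{Theorem normal}, and then merge the two resulting terms under the common denominator $C^{3}n^{2-3p}$. No new probabilistic input is needed; the corollary is a purely arithmetic consequence of the general bound.

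Concretely, I would first note that $M(f)(J(f)+M(f))\le (a/n)\cdot(a+b)/n=a(a+b)/n^{2}$ and $M(f)^{3}\le a^{3}/n^{3}$. Plugging these into Theorem~\ref{Theorem normal} and using $\sigma^{2}(f)\ge C^{2}n^{-2p}$, $\sigma^{3}(f)\ge C^{3}n^{-3p}$, yields
\[
d_{\mathcal{N}}(f(\mathbf{X}))\le\frac{a(a+b)}{C^{2}n^{3/2-2p}}+\frac{a^{3}}{2C^{3}n^{2-3p}}.
\]
To bring the two terms onto the same denominator I would invoke the a priori bound \eqref{A priori variance bound}, which gives $C^{2}n^{-2p}\le\sigma^{2}(f)\le a^{2}/(4n)$ and hence forces $p\ge 1/2$. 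Consequently $2-3p\le 3/2-2p$, so $n^{3/2-2p}\ge n^{2-3p}$ for $n\ge 1$, and therefore
\[
\frac{a(a+b)}{C^{2}n^{3/2-2p}}\le\frac{a(a+b)}{C^{2}n^{2-3p}}=\frac{Ca(a+b)}{C^{3}n^{2-3p}}.
\]
Combining with the second term and discarding the harmless factor $1/2$ in front of $a^{3}$ gives exactly $(Ca(a+b)+a^{3})/(C^{3}n^{2-3p})$.

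There is no genuine obstacle here: the argument is a mechanical chain of inequalities. The only point of care is the exponent comparison $n^{3/2-2p}\ge n^{2-3p}$, which is why I would explicitly record that the Efron--Stein bound forces $p\ge 1/2$ before collapsing the two terms; without this observation, the denominator $n^{2-3p}$ in the target would not dominate the first summand.
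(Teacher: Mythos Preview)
Your proposal is correct and matches the paper's approach: the paper simply says the corollary is obtained ``with some algebra'' from Theorem~\ref{Theorem normal}, and you have spelled out exactly that algebra, including the use of the a~priori bound~\eqref{A priori variance bound} to justify $p\ge 1/2$ (which the paper states in the text immediately preceding the corollary).
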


So if a sequence $f_{n}$ has $\left( a,b\right) $-weak interactions, $\sigma
\left( f_{n}\right) \geq Cn^{-p}$ and $1/2\leq p<2/3$, then the sequence $%
\left( f_{n}\left( \mathbf{X}\right) -E\left[ f_{n}\right] \right) /\sigma
\left( f_{n}\right) $ converges to a standard normal variable in the
Wasserstein metric. For $p\geq 2/3$ the result says nothing about the
asymptotic distribution. In the simplest case $p=1/2$ the rate of approach
to normality is $n^{-1/2}$.

\subsection{Empirical bounds for weak interactions\label{Subsection
empirical Bounds}}

Now we will cast the Bernstein inequality (\ref{Bernstein inequality}) and
the normal approximation inequality of the previous section into an
empirical form by using the results on variance estimation of Section \ref%
{Subsection VarianceEstimation}. In this case we will need identical
distribution of the variables $X_{i}$.

To combine Bernstein's inequality (\ref{Bernstein inequality}) and the upper
bound on the variance of Corollary \ref{Corollary weak interaction variance}
elementary estimates give

\begin{theorem}
\label{Theorem empirical Bernstein Bound}{\em (Empirical Bernstein Inequality)}
If $f\in \mathcal{A}_{n}$ has $\left( a,b\right) $-weak interactions and the 
$X_{i}$ are iid, then for $\delta >0$ with probability at least $1-\delta $%
\begin{equation*}
f\left( \mathbf{X}\right) \leq E\left[ f\right] +\sqrt{2v_{f}\left( \mathbf{X%
}\right) \ln \left( 2/\delta \right) }+\frac{\left( 8a/3+5b\right) \ln
\left( 2/\delta \right) }{n}. 
\end{equation*}
\end{theorem}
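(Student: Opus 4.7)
The plan is to combine the Bernstein-type inequality (\ref{Bernstein inequality}) with the one-sided version of Theorem \ref{Theorem Variance Raw} through a union bound, and then simplify by applying the weak-interaction bounds $M(f)\le a/n$ and $J(f)\le b/n$.

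First I would fix $\delta>0$ and define two ``bad'' events. Let $E_1$ be the event that (\ref{Bernstein inequality}) fails at confidence $\delta/2$, so that $\Pr(E_1)<\delta/2$, and let $E_2$ be the event that the one-sided version of Theorem \ref{Theorem Variance Raw} at confidence $\delta/2$ fails, which also has probability less than $\delta/2$. Outside $E_1\cup E_2$ (an event of probability at least $1-\delta$) I would have simultaneously
\begin{equation*}
f(\mathbf{X}) - E[f] \le \sqrt{2\, E[\Sigma^2(f)]\,\ln(2/\delta)} + \bigl(2M(f)/3+J(f)\bigr)\ln(2/\delta)
\end{equation*}
and $\sqrt{E[\Sigma^2(f)]}\le \sqrt{v_f(\mathbf{X})}+\sqrt{(2M(f)^2+8J(f)^2)\ln(2/\delta)}$.

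Next I would substitute the second inequality into the first. Multiplying the variance bound by $\sqrt{2\ln(2/\delta)}$ and using $\sqrt{2\ln(2/\delta)}\cdot\sqrt{(2M^2+8J^2)\ln(2/\delta)}=2\sqrt{M(f)^2+4J(f)^2}\,\ln(2/\delta)$, I obtain
\begin{equation*}
f(\mathbf{X})-E[f]\le \sqrt{2 v_f(\mathbf{X})\ln(2/\delta)} + \Bigl(2\sqrt{M(f)^2+4J(f)^2}+2M(f)/3+J(f)\Bigr)\ln(2/\delta).
\end{equation*}
Finally I would apply $M(f)\le a/n$ and $J(f)\le b/n$ and the elementary bound $\sqrt{a^2+4b^2}\le a+2b$ (which follows from squaring and noting $4ab\ge 0$) to conclude that the coefficient of $\ln(2/\delta)/n$ is at most $2(a+2b)+2a/3+b=8a/3+5b$, matching the statement.

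The only mildly delicate step is the choice of the algebraic simplification $\sqrt{a^2+4b^2}\le a+2b$ rather than, say, $\sqrt{a^2+4b^2}\le \sqrt{2}\max(a,2b)$; everything else is routine union-bound and square-root arithmetic. I do not expect a genuine obstacle here, since the inputs (Bernstein inequality and variance concentration at $1/n$ rate) already encode all the probabilistic content, and the theorem is essentially their juxtaposition with the weak-interaction parameters inserted.
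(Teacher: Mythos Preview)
Your proposal is correct and is exactly the argument the paper has in mind: the paper merely states that the result follows by combining the Bernstein inequality (\ref{Bernstein inequality}) with the upper variance estimate (equivalently, the one-sided bound of Theorem \ref{Theorem Variance Raw} specialized to weak interactions, which is the upper bound in Corollary \ref{Corollary weak interaction variance}) via ``elementary estimates,'' and your computation carries out precisely those estimates, including the simplification $\sqrt{a^{2}+4b^{2}}\le a+2b$ that yields the constant $8a/3+5b$.
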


While for Bernstein's inequality we want the variance to be small, for our
normal approximation result, Theorem \ref{Theorem normal}, we want it to be
big. The situation is also more complicated, because the variance now
appears in the denominator of the bound, so the estimate may fail. In fact
it may even fail for all members of a sequence, because asymptotic normality
needn't hold. We therefore precede the empirical bound by a test to verify
its applicability.

\begin{theorem}
\label{Theorem empirical normal approximation}Suppose that $f\in \mathcal{A}%
_{n}$ has $\left( a,b\right) $-weak interactions and the $X_{i}$ are iid.
For $\delta >0$ let $A\left( \delta \right) $ and $B$ be the events 
\begin{eqnarray*}
A\left( \delta \right) &=&\left\{ \frac{\sqrt{v_{f}\left( \mathbf{X}\right) }%
}{2}\geq \frac{K_{-}\left( a,b\right) \sqrt{\ln \left( 1/\delta \right) }}{n}%
\right\} , \\
B &=&\left\{ d_{\mathcal{N}}\left( f\left( \mathbf{X}\right) \right) \leq 
\frac{4\left( a^{2}+ab\right) }{v_{f}\left( \mathbf{X}\right) n^{3/2}}+\frac{%
4a^{3}}{v_{f}\left( \mathbf{X}\right) ^{3/2}n^{2}}\right\} .
\end{eqnarray*}%
Then $\Pr \left( A\left( \delta \right) \implies B\right) \geq 1-\delta $.
\end{theorem}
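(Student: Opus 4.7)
The plan is to combine the one-sided lower bound on $\sigma(f)$ from Corollary \ref{Corollary weak interaction variance} with the deterministic normal approximation bound from Theorem \ref{Theorem normal}. The event $A(\delta)$ is engineered precisely so that the estimator $\sqrt{v_f(\mathbf{X})}$ is large enough that the confidence bound on $\sigma(f)$ forces $\sigma(f) \geq \sqrt{v_f(\mathbf{X})}/2$. On this good event the denominators appearing in Theorem \ref{Theorem normal} can be controlled purely in terms of $v_f(\mathbf{X})$.

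More concretely, I would first invoke the one-sided version of Corollary \ref{Corollary weak interaction variance} (replacing $2/\delta$ by $1/\delta$) to conclude that, with probability at least $1-\delta$ in $\mathbf{X}$,
\begin{equation*}
\sigma(f) \geq \sqrt{v_f(\mathbf{X})} - \frac{K_-(a,b)\sqrt{\ln(1/\delta)}}{n}.
\end{equation*}
Call this event $E$. I would then argue pointwise: on the intersection $E \cap A(\delta)$, the defining inequality of $A(\delta)$ gives $K_-(a,b)\sqrt{\ln(1/\delta)}/n \leq \sqrt{v_f(\mathbf{X})}/2$, so that $\sigma(f) \geq \sqrt{v_f(\mathbf{X})}/2$, and hence $\sigma^2(f) \geq v_f(\mathbf{X})/4$ and $\sigma^3(f) \geq v_f(\mathbf{X})^{3/2}/8$.

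Next I apply Theorem \ref{Theorem normal}, substituting the weak-interaction bounds $M(f) \leq a/n$ and $J(f) \leq b/n$. The first summand becomes at most $a(a+b)/(n^{3/2}\sigma^2(f))$ and the second at most $a^3/(2n^2 \sigma^3(f))$. Using the variance lower bound from the previous step, $1/\sigma^2(f) \leq 4/v_f(\mathbf{X})$ and $1/\sigma^3(f) \leq 8/v_f(\mathbf{X})^{3/2}$, these become respectively $4(a^2+ab)/(n^{3/2}v_f(\mathbf{X}))$ and $4a^3/(n^2 v_f(\mathbf{X})^{3/2})$, which is exactly the statement of event $B$.

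Putting the pieces together, on $E$ the implication $A(\delta) \Rightarrow B$ holds deterministically, hence $\Pr(A(\delta)\Rightarrow B) \geq \Pr(E) \geq 1-\delta$. The main (minor) subtlety is bookkeeping: one must use the one-sided half of Corollary \ref{Corollary weak interaction variance} so that the single confidence parameter $\delta$ appears in both $A(\delta)$ and the final probability, and one must recognize that the claim is phrased as a probability on the implication rather than on the conclusion itself, which is what allows the bound to be stated without any unconditional lower bound on $\sigma(f)$.
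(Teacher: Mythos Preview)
Your proposal is correct and follows essentially the same route as the paper: your event $E$ is precisely the paper's event $C(\delta)$ obtained from the one-sided lower bound of Corollary~\ref{Corollary weak interaction variance}, and the remaining argument (on $E\cap A(\delta)$ deduce $\sigma(f)\ge \sqrt{v_f(\mathbf{X})}/2$, then plug $M(f)\le a/n$, $J(f)\le b/n$ into Theorem~\ref{Theorem normal}) matches the paper's proof, with the algebra spelled out in more detail than the paper itself provides.
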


\begin{proof}
Let $C\left( \delta \right) $ be the event%
\begin{equation*}
C\left( \delta \right) =\left\{ \sqrt{v_{f}\left( \mathbf{X}\right) }-\frac{%
K_{-}\left( a,b\right) \sqrt{\ln \left( 1/\delta \right) }}{n}\leq \sigma
\left( f\right) \right\} . 
\end{equation*}%
Then by Corollary \ref{Corollary weak interaction variance} $\Pr C\left(
\delta \right) \geq 1-\delta $. But under $C\left( \delta \right) $ the
event $A$ implies%
\begin{equation*}
\frac{\sqrt{v_{f}\left( \mathbf{X}\right) }}{2}\leq \sqrt{v_{f}\left( 
\mathbf{X}\right) }-\frac{K_{-}\left( a,b\right) \sqrt{\ln \left( 1/\delta
\right) }}{n}\leq \sigma \left( f\right) 
\end{equation*}%
which implies $B$ by Theorem \ref{Theorem normal} and $\left( a,b\right) $%
-weak interactions of $f$.
\end{proof}

On a sequence of functions $f_{n}$ this result could be put to work as
follows. First fix $\delta $ and $n$ and observe $\mathbf{X}_{1}^{n+1}$.
Then compute the variance estimator and check if $A\left( \delta \right) $
holds. If it doesn't hold then $n$ may be to small and we may try a larger $%
n $. If we don't get it too work then the variances decay too fast and $f_{n}
$ may simply not be asymptotically normal, so we give up. If $A\left( \delta
\right) $ holds on the other hand, we have an empirical bound on normal
approximation, which can tell us a lot about the distribution of $f\left( 
\mathbf{X}\right) $.

In the regime where Corollary \ref{Corollary weak interaction normal
approximation} guarantees asymptotic normality, that is $\sigma \left(
f_{n}\right) \geq Cn^{-p}$ and $1/2\leq p<2/3$, Corollary \ref{Corollary
weak interaction variance} guarantees that the test $A\left( \delta \right) $
succeeds with high probability for sufficiently large $n$.

\section{Examples of functions with weak interactions\label{Section Functions with weak
interactions}}

We give examples of functions having weak interactions and identify the
parameters $\left( a,b\right) $, so as to make the results of the previous
section applicable. Some obvious closure relations for functions with weak
interactions follow from the fact that $M$ and $J$ are seminorms. If $f_{1}$
and $f_{2}$ have $\left( a_{1},b_{1}\right) $- and $\left(
a_{2},b_{2}\right) $-weak interactions respectively and $c\in 
\mathbb{R}
$, then $f_{1}+f_{2}$ has $\left( a_{1}+a_{2},b_{1}+b_{2}\right) $-weak
interactions, $f_{1}+c$ has $\left( a_{1},b_{1}\right) $-weak interactions
and $cf_{1}$ has $\left( \vert c\vert a,\vert c\vert b\right) $-weak interactions. The last fact
allows to rescale the conveniently scaled examples we choose below.

\subsection{The sample mean, V- and U-statistics\label{Subsection MandU
statistics}}

Let $\mathcal{X=}\left[ 0,1\right] $. The sample mean%
\begin{equation*}
f\left( \mathbf{x}\right) =\frac{1}{n}\sum_{i=1}^{n}x_{i} 
\end{equation*}%
has seminorm values $M\left( f\right) =1/n$ and $J\left( f\right) =0$, and
therefore $\left( 1,0\right) $-weak interactions. $f\left( \mathbf{X}\right) 
$ is an unbiased estimator of the expectation of a $\left[ 0,1\right] $%
-valued random variable.

V- and U-statistics are generalizations of the sample mean. Fix $1\leq m<n$,
and for any multi-index $\mathbf{j}=\left( j_{1},...,j_{m}\right) \in
\left\{ 1,...,n\right\} ^{m}$ let $\kappa _{\mathbf{j}}:\mathcal{X}%
^{m}\rightarrow \left[ -1,1\right] $ and define $V,U:\mathcal{X}%
^{m}\rightarrow 
\mathbb{R}
$,%
\begin{eqnarray*}
V\left( \mathbf{x}\right) &=&n^{-m}\sum_{\mathbf{j}\in \left\{
1,...,n\right\} ^{m}}\kappa _{\mathbf{j}}\left(
x_{j_{1}},...,x_{j_{m}}\right) \\
U\left( \mathbf{x}\right) &=&\binom{n}{m}^{-1}\sum_{1\leq
j_{1}<...<j_{m}\leq n}\kappa _{\mathbf{j}}\left(
x_{j_{1}},...,x_{j_{m}}\right) .
\end{eqnarray*}%
V-statistics have their name from Richard von Mises, who studied their
asymptotic distributions \citep{Van Mises}. $V\left( \mathbf{x}\right) $
receives contributions from multi-indices with multiple occurrences of
individual indices. But in the expression for $D_{y,y^{\prime }}^{k}V\left( 
\mathbf{x}\right) $ only those multi-indices $\mathbf{j}$ survive, which
contain $k$, with the corresponding contribution being at most $2n^{-m}$.
There is a first position where $k$ appears in $\mathbf{j}$, for which there
are $m$ possibilities, and the remaining indices $j_{i}$ can assume all
values in $\left\{ 1,...,n\right\} $. It follows that there are at most $%
mn^{m-1}$ surviving multi-indices with maximal contribution $2n^{-m}$, whence%
\begin{equation*}
M\left( V\right) =\max_{k}\sup_{\mathbf{x},y,y^{\prime }}D_{y,y^{\prime
}}^{k}V\left( \mathbf{x}\right) \leq \frac{2mn^{m-1}}{n^{m}}=\frac{2m}{n}. 
\end{equation*}%
For $D_{z,z^{\prime }}^{l}D_{y,y^{\prime }}^{k}V\left( \mathbf{x}\right) $
with $k\neq l$ each contributing index must contain both $k$ and $l$. For
the positions of $k$ and $l$ there are $m\left( m-1\right) $ possibilities.
The remaining $m-2$ indices being arbitrary, there is a total of at most $%
m\left( m-1\right) n^{m-2}$ contributing indices, each making a contribution
of at most $4n^{-m}$. Therefore $D_{z,z^{\prime }}^{l}D_{y,y^{\prime
}}^{k}V\left( \mathbf{x}\right) \leq 4m\left( m-1\right) /n^{2}$ and%
\begin{equation*}
J\left( V\right) =n~\max_{k\neq l}\sup_{\mathbf{x},z,z^{\prime },y,y^{\prime
}}D_{z,z^{\prime }}^{l}D_{y,y^{\prime }}^{k}V\left( \mathbf{x}\right) \leq
4m\left( m-1\right) /n. 
\end{equation*}%
We conclude that $V$ has $\left( 2m,4m\left( m-1\right) \right) $-weak
interactions.

U-statistics avoid multi-indices with multiple occurrences of indices. If
all the $\kappa _{\mathbf{j}}$ are equal to some permutation symmetric
function $\kappa $, and the $X_{i}$ are iid, then $U\left( \mathbf{X}\right) 
$ is an unbiased estimator for $E\left( X_{1},...,X_{m}\right) $, which
accounts for their name \citep{Hoeffding 1948}. U-statistics are relevant to
metric learning \citep{Cao 2016} and ranking \citep{Clemencon 2008}. Similar
to $V$-statistics it is not difficult to show that $U$ has $\left(
2m,4m^{2}\right) $-weak interactions \citep[see][]{Maurer 2017}.

U-statistics have been extensively studied. There are normal approximation
results for nondegenerate U-statistics in \cite{Chen 2010}, which
use the Kolmogorov distance and seem to slightly improve over what we get
from substituting $\left( 2m,4m^{2}\right) $ in Corollary \ref{Corollary
weak interaction normal approximation}. These results also contain
variances, which would make them amenable to variance estimation as in
Theorem \ref{Theorem empirical normal approximation}.

\citet{Peel 2010} use the fact that the variance of a U-statistic
is itself a U-statistic and use either \citet{Hoeffding 1948} or
\citet{Arcones 1995} versions of Bernstein's inequality for
U-statistics to estimate the variance. These bounds are however inferior to
the Bernstein inequality (\ref{Bernstein inequality}), because the first
does not use the correct variance proxy and the second has a scale proxy
which increases exponentially in the order $m$. The same problem besets the
empirical Bernstein bounds given in \citep{Peel 2010}, which is inferior to
the general result we get from Theorem \ref{Theorem empirical Bernstein
Bound} except for the first version of \citet{Peel 2010} in a regime of large 
$m/n$ and a kernel $\kappa $ far from degeneracy.

\subsection{Lipschitz L-statistics\label{Subsection Lipschitz L-statistics}}

Let $\mathcal{X}=\left[ 0,1\right] $ and use $\left( x_{\left( 1\right)
},...,x_{\left( n\right) }\right) $ to denote the order statistic of $%
\mathbf{x\in }\mathcal{X}^{n}$. Let $F:\left[ 0,1\right] \rightarrow 
\mathbb{R}
$ have supremum norm $\left\Vert F\right\Vert _{\infty }$ and
Lipschitz-constant $\left\Vert F\right\Vert _{Lip}$ and consider the function%
\begin{equation}
f\left( \mathbf{x}\right) =\frac{1}{n}\sum_{i=1}^{n}F\left( i/n\right)
x_{\left( i\right) }\text{.}  \label{L-statistic}
\end{equation}%
In the appendix, Section \ref{Subsection Argument L-statistics} we show that $f$ has $\left( \left\Vert F\right\Vert _{\infty },\left\Vert
F\right\Vert _{Lip}\right) $-weak interactions. \newline
Such statistics also generalize the sample mean, which is obtained by
choosing $F$ identically $1$. Appropriate choices of $F$ lead to smoothly
trimmed means or smoothened quantiles. For example with $\zeta \in \left(
0,1/2\right) $ the choice 
\begin{equation*}
F\left( t\right) =\left\{ 
\begin{array}{ccc}
0 & \text{if} & t\in \left[ 0,\zeta \right] \\ 
\frac{t-\zeta }{\left( 1/2-\zeta \right) ^{2}} & \text{if} & t\in \left[
\zeta ,1/2\right] \\ 
\frac{1-t-\zeta }{\left( 1/2-\zeta \right) ^{2}} & \text{if} & t\in \left[
1/2,1-\zeta \right] \\ 
0 & \text{if} & t\in \left[ 1-\zeta \right]%
\end{array}%
\right. 
\end{equation*}%
effects a smoothened median where $F$ has the Lipschitz constant $\left\Vert
F\right\Vert _{Lip}=\left( 1/2-\zeta \right) ^{-2}$. The case $\zeta =0$ has the best guaranteed
estimation properties, but its expectation is the coarsest substitute of the
median. As $\zeta \rightarrow 1/2$ estimation deteriorates, but the expectation becomes closer to a median.

Normal approximation results for these statistics in terms of the Kolmogorov
distance are also given in \cite{Chen 2010}, similar to what we
obtain by substituting $\left( \left\Vert F\right\Vert _{\infty },\left\Vert
F\right\Vert _{Lip}\right) $ in Corollary \ref{Corollary weak
interaction normal approximation}. We are not aware of any results giving
Bernstein-type inequalities or tight variance estimation in this case.

\subsection{$\ell_2$-regularization\label{Subsection L_2 regularization}}

While the previous examples had a certain kinship to the sample mean, the
following looks quite different. Let $\left( H,\left\langle \cdot,\cdot\right\rangle
,\left\Vert \cdot\right\Vert \right) $ be a real Hilbert space with unit ball $%
\mathbb{B}_{1}=\mathcal{X}$ and define $g:$ $\mathcal{X}^{n}\rightarrow H$ by%
\begin{equation}
g\left( \mathbf{x}\right) =\arg \min_{w\in H}\frac{1}{n}\sum_{i=1}^{n}\ell
\left( \left\langle x_{i},w\right\rangle \right) +\lambda \left\Vert
w\right\Vert ^{2},  \label{Define g}
\end{equation}%
where the non-negative real loss function $\ell $ is assumed to be convex,
three times differentiable and satisfies $\ell \left( 0\right) =1$, and the
regularization parameter $\lambda $ satisfies $0<\lambda <1$. Then $g$ is a
well-known regularized algorithm which upon thresholding can be used for
linear classification.

Define the empirical and the true losses $\hat{L}$ and $L:\mathcal{X}%
^{n}\rightarrow 
\mathbb{R}
$ by%
\begin{equation*}
\hat{L}\left( \mathbf{x}\right) =\frac{1}{n}\sum_{i}\ell \left( \left\langle
x_{i},g\left( \mathbf{x}\right) \right\rangle \right) \text{ and }L\left( 
\mathbf{x}\right) =E_{x\sim \mu }\left[ \ell \left( \left\langle x,g\left( 
\mathbf{x}\right) \right\rangle \right) \right] , 
\end{equation*}%
where $\mu $ is some probability measure on $\mathbb{B}_{1}$. Let%
\begin{equation*}
\Delta \left( \mathbf{x}\right) =L\left( \mathbf{x}\right) -\hat{L}\left( 
\mathbf{x}\right) , 
\end{equation*}%
which measures how much the true and empirical loss of the algorithm differ.
It has been shown in \cite[Proposition 5][]{Maurer2017b}, that $\Delta $ has $%
\left( c_{1}\lambda ^{-3/2},c_{2}\lambda ^{-4}\right) $-weak interactions,
where the constants $c_{i}$ depend on the derivatives of the loss-function $%
\ell $. In \citep{Maurer2017b} this is used to apply the Bernstein inequality
(\ref{Bernstein inequality}) to the random variable $\Delta \left( \mathbf{X}%
\right) $. Here we complement this result by simply substituting the weak
interaction parameters in Corollary \ref{Corollary weak interaction variance}
and Corollary \ref{Corollary weak interaction normal approximation} so as to
obtain bounds to estimate the variance of $\Delta \left( \mathbf{X}\right) $
and to give bounds on normal approximation.

\subsection{A chain rule\label{Subsection Chain rule}}

We interrupt the presentation of examples, to show how new interesting
examples of functions with weak interactions can be generated from given
ones, in addition to the obvious closure relations which follow from $M$ and 
$J$ being seminorms. First we extend the definitions of $M$ and $J$ to
Banach-space valued functions $f:\mathcal{X}^{n}\rightarrow B$ in an obvious
way by setting%
\begin{equation*}
M\left( f\right) =\max_{k}\sup_{\mathbf{x},y,y^{\prime }}\left\Vert
D_{yy^{\prime }}^{k}f\left( x\right) \right\Vert \text{ and }J\left(
f\right) =n~\max_{k\neq l}\sup_{\mathbf{x},y,y^{\prime },z,z^{\prime
}}\left\Vert D_{zz^{\prime }}^{l}D_{yy^{\prime }}^{k}f\left( x\right)
\right\Vert ,
\end{equation*}%
and we say that $f$ has $\left( a,b\right) $-weak interactions if $M\left(
f\right) \leq a/n$ and $J\left( f\right) \leq b/n$. Then we have the
following chain rule, whose proof will be given in the appendix, Section \ref%
{Subsection Proof Chainrule}.

\begin{lemma}
\label{Lemma chain rule}Let $B$ be a Banach space, $U\subseteq B$ convex, $f:%
\mathcal{X}^{n}\rightarrow U$, and assume that the function $F:U\rightarrow 
\mathbb{R}
$ is twice Fr\'{e}chet-differentiable. Then%
\begin{eqnarray*}
M\left( F\circ f\right) &\leq &\sup_{v\in U}\left\Vert F^{\prime }\left(
v\right) \right\Vert M\left( f\right) \text{ and} \\
J\left( F\circ f\right) &\leq &n\sup_{v\in U}\left\Vert F^{\prime \prime
}\left( v\right) \right\Vert M\left( f\right) ^{2}+\sup_{v\in U}\left\Vert
F^{\prime }\left( v\right) \right\Vert J\left( f\right) ,
\end{eqnarray*}%
where $\left\Vert F^{\prime }\left( v\right) \right\Vert $ and $\left\Vert
F^{\prime \prime }\left( v\right) \right\Vert $ are the norms of the linear
respectively bilinear functionals $F^{\prime }\left( v\right) $ and $%
F^{\prime \prime }\left( v\right) $.%
\begin{equation*}
\left\Vert F^{\prime }\left( v\right) \right\Vert =\sup_{w\in B,\left\Vert
w\right\Vert \leq 1}\left\Vert F^{\prime }\left( v\right) \left( w\right)
\right\Vert \text{ and }\left\Vert F^{\prime \prime }\left( v\right)
\right\Vert =\sup_{w_{1},w_{2}\in B,\left\Vert w_{i}\right\Vert \leq
1}\left\Vert F^{\prime \prime }\left( v\right) \left( w_{1},w_{2}\right)
\right\Vert \text{.} 
\end{equation*}
\end{lemma}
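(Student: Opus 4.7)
The plan is to derive both bounds from a Banach-space version of the fundamental theorem of calculus. Convexity of $U$ is invoked precisely to guarantee that the straight-line paths used in the integrals remain inside the domain of $F$, so that $F'$ and $F''$ are available at every intermediate point.

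For the bound on $M(F \circ f)$, fix $\mathbf{x}$, $k$, and $y, y' \in \mathcal{X}$, and set $A = f(S^k_y \mathbf{x})$, $B = f(S^k_{y'} \mathbf{x})$. Writing
\begin{equation*}
D^k_{y,y'}(F \circ f)(\mathbf{x}) = F(A) - F(B) = \int_0^1 F'\bigl((1-t)B + tA\bigr)(A - B)\,dt
\end{equation*}
and bounding the integrand by $\sup_{v \in U}\|F'(v)\| \cdot \|A - B\| \leq \sup_{v \in U}\|F'(v)\| \cdot M(f)$ gives the first inequality after taking the sup.

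For $J(F \circ f)$, fix $k \neq l$ and label the four function values $A = f(S^l_z S^k_y \mathbf{x})$, $B = f(S^l_z S^k_{y'} \mathbf{x})$, $C = f(S^l_{z'} S^k_y \mathbf{x})$, $D = f(S^l_{z'} S^k_{y'} \mathbf{x})$. Then $A - B$ and $C - D$ are first $k$-differences of $f$ at fixed $l$-values and $A - C$, $B - D$ are first $l$-differences, each of norm $\leq M(f)$, while $(A - B) - (C - D)$ is the mixed second difference, of norm $\leq J(f)/n$. My quantity of interest is $F(A) - F(B) - F(C) + F(D)$. I would introduce the real-valued auxiliary function $g(t) = F((1-t)B + tA) - F((1-t)D + tC)$ on $[0,1]$, whose endpoints satisfy $g(1) - g(0) = F(A) - F(B) - F(C) + F(D)$. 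Writing $p(t) = (1-t)B + tA$ and $q(t) = (1-t)D + tC$ and splitting
\begin{equation*}
g'(t) = F'(p(t))\bigl((A-B) - (C-D)\bigr) + \bigl[F'(p(t)) - F'(q(t))\bigr](C-D),
\end{equation*}
the first summand is bounded by $\sup_v \|F'(v)\| \cdot J(f)/n$, and the second, via another integral representation of $F'(p(t)) - F'(q(t))$ through $F''$, is bounded by $\sup_v\|F''(v)\| \cdot \|p(t) - q(t)\| \cdot M(f)$. Integrating over $t$ and multiplying by $n$ yields the stated inequality.

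The main obstacle is to keep the constants clean: a naive decomposition would produce an unwanted cross term of the form $\sup_v\|F''(v)\| M(f) J(f)/n$. The crucial observation is that $p(t) - q(t) = (1-t)(B - D) + t(A - C)$ is a convex combination of two first $l$-differences of $f$, so $\|p(t) - q(t)\| \leq M(f)$ uniformly in $t$ and no residual $J(f)/n$ factor appears in the $F''$ term. Pairing this with the first summand being organized around the mixed difference $(A - B) - (C - D)$ rather than around $A - B$ alone is what makes both contributions appear cleanly and additively in the final bound.
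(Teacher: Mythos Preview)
Your proof is correct and follows essentially the same strategy as the paper's: both arguments interpolate linearly between the four values $A,B,C,D$ inside the convex set $U$ and apply the fundamental theorem of calculus, with the key point being that the ``diagonal'' difference $p(t)-q(t)$ (the paper's $\partial_s g(s,t)$) is a convex combination of first differences and hence bounded by $M(f)$, so the $F''$ term picks up $M(f)^2$ rather than $M(f)J(f)/n$. The only cosmetic difference is that the paper packages everything as a single bilinear parametrization $g(s,t)$ and a double integral of $\partial_s\partial_t F(g(s,t))$, whereas you iterate two one-dimensional integrals; the underlying decomposition and the resulting constants are identical.
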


The lemma shows that if $f$ has $\left( a,b\right) $-weak interactions and $%
\left\Vert F^{\prime \prime }\left( v\right) \right\Vert $ and $\left\Vert
F^{\prime }\left( v\right) \right\Vert $ are bounded on $U$, then $F\circ f$
has $\left( a^{\prime },b^{\prime }\right) $-weak interactions, where%
\begin{equation*}
a^{\prime }=a\sup_{v\in U}\left\Vert F^{\prime }\left( v\right) \right\Vert 
\text{ and }b^{\prime }=a^{2}\sup_{v\in U}\left\Vert F^{\prime \prime
}\left( v\right) \right\Vert +b\sup_{v\in U}\left\Vert F^{\prime }\left(
v\right) \right\Vert . 
\end{equation*}
It also shows our definition of weak interactions with its $1/n$-scaling is
the only definition of a class of functions such that $M$ and $J$ are of the same order in $n$, and the class is invariant under compositions with $C^{2}$ functions with bounded derivatives.

\subsection{The Gibbs algorithm\label{Subsection Gibbs algorithm}}

We use the chain rule, Lemma \ref{Lemma chain rule}, to show that several quantities
related to the Gibbs algorithm have weak interactions and thus satisfy the
conditions for the results in Section \ref{Section Properties of Weak
Interactions}.

Let $\Omega $ be some space of ``models" endowed with some positive a-priori
measure $\rho $ and suppose that $\ell :\left( \omega ,x\right) \in \Omega
\times \mathcal{X}\mapsto \ell \left( \omega ,x\right) \in \left[ 0,1\right] 
$ is the loss of the model $\omega $ on the datum $x\in \mathcal{X}$. The
function $H:\Omega \times \mathcal{X}^{n}\rightarrow \left[ 0,1\right] $
defined by $H\left( \omega ,\mathbf{x}\right) =\left( 1/n\right)
\sum_{i=1}^n\ell \left( \omega ,x_{i}\right) $ is then just the sample average,
or empirical error of $\omega $ on $\mathbf{x}$. Let $\beta $ be some
positive constant, or ``inverse temperature". The Gibbs algorithm returns the
distribution 
\begin{equation*}
d\pi _{\mathbf{x}}\left( \omega \right) =Z^{-1}\left( \mathbf{x}\right)
e^{-\beta H\left( \omega ,\mathbf{x}\right) }d\rho \left( \omega \right) 
\text{ where }Z\left( \mathbf{x}\right) =\int_{\Omega }e^{-\beta H\left(
\omega ,\mathbf{x}\right) }d\rho \left( \omega \right) . 
\end{equation*}%
Typically this distribution is the stationary distribution of some
sample-controlled stochastic process characterizing the algorithm. The Gibbs
algorithm plays a role in the simulation of the equilibrium state in
statistical mechanics \citep{Binder 1997} or in non-convex optimization
such as simulated annealing \citep{Kirkpatrick 1983}. There is also some recent attention because $d\pi _{\mathbf{x}%
}$ can be the limiting distribution of randomized algorithms in the training
of deep neural networks \citep{Rakhlin 2017}.

To analyze the Gibbs algorithm we define the function%
\begin{equation}
\label{eq:www}
f:\mathbf{x}\in \mathcal{X}^{n}\mapsto H\left( \cdot,\mathbf{x}\right) \in
L_{\infty }\left( \Omega \right) . 
\end{equation}%
It is easy to verify that this function, which is just a Banach space-valued
sample average, has $\left( 1,0\right) $-weak interactions. Its range is
contained in the unit ball of $L_{\infty }\left( \Omega \right) $.

Related to the Gibbs algorithm is the free energy%
\begin{equation*}
\Lambda \left( \mathbf{x}\right) =\ln Z\left( \mathbf{x}\right) =\ln
\int_{\Omega }e^{-\beta H\left( \omega ,\mathbf{x}\right) }d\rho \left(
\omega \right) , 
\end{equation*}%
which is interesting, because it generates the sample error averaged under
the Gibbs distribution 
\begin{equation*}
\frac{d}{d\beta }\Lambda \left( \mathbf{x}\right) =-\int_{\Omega }H\left(
\omega ,\mathbf{x}\right) d\pi _{\mathbf{x}}\left( \omega \right) . 
\end{equation*}%
Then $\Lambda \left( \mathbf{x}\right) =\Xi \circ f\left( \mathbf{x}%
\right) $ where $\Xi $ is defined as%
\begin{equation*}
\Xi :G\left( .\right) \in L_{\infty }\left( \Omega \right) \mapsto \ln
\int_{\Omega }e^{-\beta G\left( \omega \right) }d\rho \left( \omega \right) .
\end{equation*}%
It is easy to show that $\left\Vert \Xi ^{\prime }\left( G\right)
\right\Vert \leq \beta $ and $\left\Vert \Xi ^{\prime \prime }\left(
G\right) \right\Vert \leq 2\beta ^{2}$ (see appendix Section \ref{Subsection
Gibbs Algorithm computations}). The chain rule Lemma \ref{Lemma chain rule}
then shows that $\Lambda $ has $\left( \beta ,2\beta ^{2}\right) $-weak
interactions, with corresponding consequences for a Bernstein inequality,
normal approximation and estimation of variance for the random free energy $%
\Lambda \left( \mathbf{X}\right) $.

Let $X$ be a random variable with values in $\mathcal{X}$. Then the ``true"
error is given by the function $H_{0}:\omega \mapsto E_{\mathbf{X}}\left[
H\left( \omega ,\mathbf{X}\right) \right] $ and the corresponding Gibbs
measure is%
\begin{equation*}
d\pi \left( \omega \right) =Z^{-1}e^{-\beta H_{0}\left( \omega \right)
}d\rho \left( \omega \right) .
\end{equation*}%
A question of generalization is how much the measures $d\pi _{\mathbf{x}}$
and $d\pi $ differ. We might measure this difference by the Kullback-Leibler
divergence $KL\left( d\pi _{\mathbf{x}},d\pi \right) $ of the two measures.
A mechanical computation using the chain rule (see
appendix Section \ref{Subsection Gibbs Algorithm computations}) shows that the function $%
\mathbf{x}\mapsto KL\left( d\pi _{\mathbf{x}},d\pi \right) $ has $\left(
4\beta ^{2}+2\beta ,12\beta ^{3}+6\beta ^{2}\right) $-weak interactions, 
which again gives useful
information about the random variable $KL\left( d\pi _{\mathbf{X}},d\pi
\right) $.

There is an intuitive parallel to the case of $\ell _{2}$-regularization of
Section \ref{Subsection L_2 regularization}. In both cases the weak
interaction parameters increase, with a corresponding deterioration of
estimation, as we tune more closely to the sample, which for $\ell _{2}$%
-regularization means decreasing $\lambda $ and for the Gibbs algorithm
increasing $\beta $, or lowering the ``temperature". This fits with the general paradigm of regularization.

\section{Summary and some open questions}
    We have shown that functions with weak interactions have tractable statistical properties, and that the class of such functions is quite rich, containing a number of 
well known statistics and other functions relevant to machine learning and statistics. 
    
    Our preliminary survey provides a small probabilistic toolbox which could be used in statistical learning theory. Apart from the application to $\ell_2$-regularized classification, and the analysis of Gibbs algorithms, are there other applications to supervised learning? What is the benefit of finite-sample bounds for normal approximation? Can the empirical Bernstein bound for non-additive functions be used in the analysis of reinforcement learning algorithms, just as its additive counterpart? On the theoretical side, is there a general large deviation principle for weak interactions, in the spirit of Cramer's theorem?
    
\bibliographystyle{natbib}

\section{Appendix}

The appendix contains technical material and a table of notations.
\subsection{Proof of the variance estimation theorem\label{Subsection Proof Variance estimation}}

Define an operator $D^{2}$ on $\mathcal{A}_{n}$ by 
\begin{equation*}
D^{2}f\left( \mathbf{x}\right) =\sum_{k}\left( f\left( \mathbf{x}\right)
-\inf_{y\in \mathcal{X}}S_{y}^{k}f\left( \mathbf{x}\right) \right) ^{2}. 
\end{equation*}%
The proof of Theorem \ref{Theorem Variance Raw} uses the following
concentration inequality which can be found in \citep[Theorem
13]{Maurer 2006} or \cite{Boucheron13}.

\begin{theorem}
\label{Theorem selfbound}Suppose $f:\mathcal{X}^{n}\rightarrow 
\mathbb{R}
$ satisfies for some $a>0$%
\begin{equation}
D^{2}f\left( \mathbf{x}\right) \leq af\left( \mathbf{x}\right) ,\forall 
\mathbf{x}\in \mathcal{X}^{n}\text{,}  \label{selfbound condition}
\end{equation}%
and let $\mathbf{X}=\left( X_{1},...,X_{n}\right) $ be a vector of
independent variables. Then for all $t>0$%
\begin{equation*}
\Pr \left\{ f\left( \mathbf{X}\right) -E\left[ f\right] >t\right\} \leq \exp
\left( \frac{-t^{2}}{2aE\left[ f\left( X\right) \right] +at}\right) \text{.} 
\end{equation*}%
If in addition $f\left( \mathbf{x}\right) -\inf_{y\in \mathcal{X}%
}S_{y}^{k}f\left( \mathbf{x}\right) \leq 1$ for all $k\in \left\{
1,...,n\right\} $ and all $\mathbf{x}\in \mathcal{X}^{n}$ then%
\begin{equation*}
\Pr \left\{ E\left[ f\right] -f\left( \mathbf{X}\right) >t\right\} \leq \exp
\left( \frac{-t^{2}}{2\max \left\{ a,1\right\} E\left[ f\left( X\right) %
\right] }\right) . 
\end{equation*}
\end{theorem}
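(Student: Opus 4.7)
The plan is to prove this by the entropy method combined with Herbst's argument, which is the standard route for self-bounded-function concentration. Write $Z=f(\mathbf{X})$ and $Z_k=\inf_{y\in\mathcal{X}}f(S_y^k\mathbf{X})$, so $Z-Z_k\geq 0$ and the hypothesis reads $\sum_k(Z-Z_k)^2\leq aZ$. The goal is to control the log-moment generating function $\psi(\lambda)=\log E[e^{\lambda Z}]$ and then apply Markov's inequality.

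The first pillar is entropy tensorization for product measures: since the $X_i$ are independent,
\begin{equation*}
\mathrm{Ent}[e^{\lambda Z}]\;\leq\;\sum_{k=1}^n E\bigl[\mathrm{Ent}_k[e^{\lambda Z}]\bigr],
\end{equation*}
where $\mathrm{Ent}_k$ denotes conditional entropy given $(X_i)_{i\neq k}$. The second pillar is the modified logarithmic Sobolev inequality (Boucheron--Lugosi--Massart, or \cite{Maurer 2006}): for $\lambda\geq 0$ and each $k$,
\begin{equation*}
\mathrm{Ent}_k[e^{\lambda Z}]\;\leq\;E_k\bigl[\phi(-\lambda(Z-Z_k))\,e^{\lambda Z}\bigr],\qquad \phi(u):=e^u-u-1.
\end{equation*}
Because $Z-Z_k\geq 0$, the argument of $\phi$ is non-positive, and $\phi(-u)\leq u^2/2$ for $u\geq 0$, so summing over $k$ and using the self-bounded assumption gives the clean inequality
\begin{equation*}
\mathrm{Ent}[e^{\lambda Z}]\;\leq\;\tfrac{\lambda^2}{2}\,E\!\left[\sum_k(Z-Z_k)^2\,e^{\lambda Z}\right]\;\leq\;\tfrac{a\lambda^2}{2}\,E[Z e^{\lambda Z}].
\end{equation*}

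Next comes the Herbst/ODE step. Using the identity $\mathrm{Ent}[e^{\lambda Z}]=\lambda E[Z e^{\lambda Z}]-E[e^{\lambda Z}]\psi(\lambda)$ and dividing by $E[e^{\lambda Z}]$, one obtains the differential inequality $\bigl(\lambda-\tfrac{a\lambda^2}{2}\bigr)\psi'(\lambda)\leq\psi(\lambda)$ for $\lambda\in[0,2/a)$. Integrating this (with the initial condition $\psi(0)=0$ and $\psi'(0)=E[f]$) yields a Cram\'er transform of the form $\psi(\lambda)-\lambda E[f]\leq \tfrac{a\lambda^2 E[f]}{2(1-a\lambda/2)}$; optimizing the Chernoff bound $\Pr\{Z-E[f]>t\}\leq\exp(-\sup_\lambda(\lambda t-\psi(\lambda)+\lambda E[f]))$ in $\lambda$ over $[0,2/a)$ produces exactly $\exp(-t^2/(2aE[f]+at))$. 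For the lower tail one would run the same argument with $\lambda<0$, but then $-\lambda(Z-Z_k)\geq 0$ and the clean bound $\phi(-u)\leq u^2/2$ is no longer applicable unconditionally; the extra hypothesis $Z-Z_k\leq 1$ is exactly what is needed to estimate $\phi(\lambda(Z-Z_k))$ by a term at most $\max\{a,1\}Z$ rather than $aZ$, which accounts for the appearance of $\max\{a,1\}$ in the second bound and for the absence of a linear-in-$t$ term (the smaller denominator $2\max\{a,1\}E[f]$ reflects that the Chernoff optimization is done over all $\lambda<0$ without a radius-of-convergence restriction).

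The main obstacles are the two analytic ingredients. Entropy tensorization is standard, but the modified log-Sobolev inequality for conditional entropy is the nontrivial ingredient; it is typically proved via the variational characterization $\mathrm{Ent}_k[Y]=\sup\{E_k[Y g]:E_k[e^g]\leq 1\}$ together with the inequality $u\log u-u\log v-(u-v)\leq u\,\phi(\log v-\log u)$ applied with $u=e^{\lambda Z}$ and $v=e^{\lambda Z_k}$ (noting $Z_k$ is measurable in the variables other than $X_k$). The second obstacle is the careful bookkeeping in the ODE integration to land on the precise Bernstein denominators $2aE[f]+at$ and $2\max\{a,1\}E[f]$ advertised in the statement, rather than suboptimal constants.
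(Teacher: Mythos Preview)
The paper does not actually prove Theorem~\ref{Theorem selfbound}; it is stated with the remark that it ``can be found in \citep[Theorem 13]{Maurer 2006} or \cite{Boucheron13}'' and is then used as a black box in the proof of Theorem~\ref{Theorem Variance Raw}. So there is no in-paper proof to compare against.

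That said, your outline is precisely the standard entropy-method argument used in those references: tensorization of entropy, the modified log-Sobolev inequality $\mathrm{Ent}_k[e^{\lambda Z}]\leq E_k[\phi(-\lambda(Z-Z_k))e^{\lambda Z}]$ with $Z_k$ the coordinate-wise infimum, the elementary estimate $\phi(-u)\leq u^2/2$ for $u\geq 0$ to feed in the self-bounding hypothesis, and then Herbst's differential-inequality integration to get the Bernstein denominator $2aE[f]+at$. Your explanation of why the lower-tail bound needs the extra hypothesis $Z-Z_k\leq 1$ (the sign of $-\lambda(Z-Z_k)$ flips, so one needs a bound on $\phi$ on the positive axis, which costs the factor $\max\{a,1\}$) is also correct in spirit; in \cite{Maurer 2006} this is handled by a slightly different bookkeeping, but the mechanism is the same. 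As a proof plan this is sound.
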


\begin{corollary}
\label{Corollary selfbound}If $f\in \mathcal{A}_{n}$ satisfies (\ref%
{selfbound condition}) and for some $b>0$ $f\left( \mathbf{x}\right)
-\inf_{y\in \mathcal{X}}S_{y}^{k}f\left( \mathbf{x}\right) \leq b$ for all $%
k\in \left\{ 1,...,n\right\} $ and all $\mathbf{x}\in \mathcal{X}^{n}$ then
for all $\delta >0$ with probability at least $1-\delta $%
\begin{equation*}
\sqrt{f\left( \mathbf{X}\right) }-\sqrt{2a\ln \left( 2/\delta \right) }\leq 
\sqrt{E\left[ f\right] }\leq \sqrt{f\left( \mathbf{X}\right) }+\sqrt{2\max
\left\{ a,b\right\} \ln \left( 2/\delta \right) }. 
\end{equation*}%
For a one-sided bound $2/\delta $ can be replaced by $1/\delta $.
\end{corollary}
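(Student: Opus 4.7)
My plan is to derive the corollary from Theorem \ref{Theorem selfbound} in two one-sided steps and then combine them by a union bound. The basic idea is: Theorem \ref{Theorem selfbound} gives tail bounds with $E[f]$ appearing linearly on the right-hand side, and standard manipulation of quadratic inequalities converts such tail bounds into confidence intervals on $\sqrt{E[f]}$ around $\sqrt{f(\mathbf{X})}$.

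For the upper deviation of $f$ (i.e., the lower bound on $\sqrt{E[f]}$) I would apply the first part of Theorem \ref{Theorem selfbound} directly: setting $\exp(-t^{2}/(2aE[f]+at))=\delta/2$, solving the quadratic for $t$, and using $\sqrt{u+v}\le \sqrt{u}+\sqrt{v}$ gives, with probability at least $1-\delta/2$,
\begin{equation*}
f(\mathbf{X})-E[f]\le a\ln(2/\delta)+\sqrt{2aE[f]\ln(2/\delta)}.
\end{equation*}
Since the right-hand side is at most $(\sqrt{E[f]}+\sqrt{2a\ln(2/\delta)})^{2}-E[f]$, one obtains $\sqrt{f(\mathbf{X})}-\sqrt{2a\ln(2/\delta)}\le \sqrt{E[f]}$.

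For the lower deviation I need to invoke the second part of Theorem \ref{Theorem selfbound}, which as stated requires $f-\inf_{y}S_{y}^{k}f\le 1$. The only subtlety is the rescaling: set $g=f/b$. Then $g$ satisfies $g-\inf_{y}S_{y}^{k}g\le 1$, and the self-bounding condition transfers as $D^{2}g=(1/b^{2})D^{2}f\le (a/b^{2})f=(a/b)g$, so $g$ is self-bounded with parameter $a/b$. Applying the second inequality of Theorem \ref{Theorem selfbound} to $g$ with $\max\{a/b,1\}$ and then multiplying back by $b$ yields
\begin{equation*}
\Pr\{E[f]-f(\mathbf{X})>s\}\le \exp\!\left(\frac{-s^{2}}{2\max\{a,b\}E[f]}\right),
\end{equation*}
because $b\cdot \max\{a/b,1\}=\max\{a,b\}$. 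Setting this to $\delta/2$ gives $E[f]\le f(\mathbf{X})+\sqrt{2\max\{a,b\}E[f]\ln(2/\delta)}$, and the standard quadratic trick (if $u^{2}\le v^{2}+ku$ with $u,v,k\ge 0$ then $u\le v+k$, applied to $u=\sqrt{E[f]}$, $v=\sqrt{f(\mathbf{X})}$, $k=\sqrt{2\max\{a,b\}\ln(2/\delta)}$) delivers the upper bound $\sqrt{E[f]}\le \sqrt{f(\mathbf{X})}+\sqrt{2\max\{a,b\}\ln(2/\delta)}$.

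A union bound over the two events of probability $\delta/2$ concludes the two-sided statement; for a one-sided bound one spends the full $\delta$ on one event, replacing $2/\delta$ by $1/\delta$. The main place to be careful is the rescaling $g=f/b$, since one must simultaneously track how the self-bounding constant $a$ transforms and how the cap on single-coordinate increments transforms, and verify that the combined factor is exactly $\max\{a,b\}$; otherwise the computation is routine algebra.
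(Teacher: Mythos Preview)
Your proof is correct and follows essentially the same route as the paper's own argument: rescale $f$ by $b$ to meet the unit-increment hypothesis of the second conclusion of Theorem~\ref{Theorem selfbound} (obtaining the factor $\max\{a,b\}$), apply the first conclusion directly for the other tail, convert each tail bound into a square-root confidence bound by elementary quadratic algebra, and finish with a union bound. You spell out the algebra (the $\sqrt{u+v}\le\sqrt{u}+\sqrt{v}$ step and the $u^2\le v^2+ku\Rightarrow u\le v+k$ trick) more explicitly than the paper, which simply says ``elementary algebra,'' but the content is identical.
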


\begin{proof}
If $f\left( \mathbf{x}\right) -\inf_{y\in \mathcal{X}}S_{y}^{k}f\left( 
\mathbf{x}\right) \leq b$ then $\left( f\left( \mathbf{x}\right) /b\right)
-\inf_{y\in \mathcal{X}}S_{y}^{k}\left( f\left( \mathbf{x}\right) /b\right)
\leq 1$ and (\ref{selfbound condition}) implies $D^{2}\left( f\left( \mathbf{%
x}\right) /b\right) \leq \left( a/b\right) \left( f\left( \mathbf{x}\right)
/b\right) $, so by the second conclusion of Theorem \ref{Theorem selfbound}%
\begin{eqnarray*}
\Pr \left\{ E\left[ f\right]-f\left( \mathbf{X}\right) >t\right\}  &=&\Pr
\left\{ E\left[ f/b\right]-f\left( \mathbf{X}\right) /b >t/b\right\}  \\
&\leq &\exp \left( \frac{-\left( t/b\right) ^{2}}{2\max \left\{
a/b,1\right\} E\left[ f/b\right] }\right) =\exp \left( \frac{-t^{2}}{2\max
\left\{ a,b\right\} E\left[ f\right] }\right) 
\end{eqnarray*}%
(this is really an alternative formulation of the second conclusion of
Theorem \ref{Theorem selfbound}). Equating the R.H.S. to $\delta $ solving
for $t$ and elementary algebra then give with probability at least $1-\delta 
$ that%
\begin{equation*}
\sqrt{E\left[ f\right] }\leq \sqrt{f\left( \mathbf{X}\right) }+\sqrt{2\max
\left\{ a,b\right\} \ln \left( 1/\delta \right) }.
\end{equation*}%
In a similar way the first conclusion of Theorem \ref{Theorem selfbound}
gives with probability at least $1-\delta $ that%
\begin{equation*}
\sqrt{f\left( \mathbf{X}\right) }-\sqrt{2a\ln \left( 1/\delta \right) }\leq 
\sqrt{E\left[ f\right] }.
\end{equation*}%
A union bound concludes the proof. 
\end{proof}

\begin{proof}{\bf of Theorem \protect\ref{Theorem Variance Raw}}~
First we show that $v_{f}$ is an unbiased estimator for the Efron-Stein
upper bound $E\left[ \Sigma ^{2}\left( f\right) \right] $. Observe that for $%
1\leq i<j\leq n+1$%
\begin{equation*}
E\left[ \left( f\left( S_{-}^{j}\mathbf{X}\right) -f\left(
S_{-}^{j}S_{X_{j}}^{i}\mathbf{X}\right) \right) ^{2}\right] =2E\left[ \sigma
_{i}^{2}\left( f\right) \right] , 
\end{equation*}%
while for $1\leq j<i\leq n+1$%
\begin{equation*}
E\left[ \left( f\left( S_{-}^{j}\mathbf{X}\right) -f\left(
S_{-}^{j}S_{X_{j}}^{i}\mathbf{X}\right) \right) ^{2}\right] =2E\left[ \sigma
_{i-1}^{2}\left( f\right) \right] . 
\end{equation*}%
Thus%
\begin{eqnarray*}
E\left[ v_{f}\right] &=&\frac{1}{2\left( n+1\right) }\sum_{i=1}^{n+1}%
\sum_{j:j\neq i}E\left[ \left( f\left( S_{-}^{j}\mathbf{X}\right) -f\left(
S_{-}^{j}S_{x_{j}}^{i}\mathbf{X}\right) \right) ^{2}\right] \\
&=&\frac{1}{n+1}\left( \sum_{i=2}^{n+1}\sum_{j=1}^{i-1}E\left[ \sigma
_{i-1}^{2}\left( f\right) \right] +\sum_{i=1}^{n}\sum_{j=i+1}^{n+1}E\left[
\sigma _{i}^{2}\left( f\right) \right] \right) \\
&=&\frac{1}{n+1}\sum_{i=1}^{n}\left( n+1\right) E\left[ \sigma
_{i}^{2}\left( f\right) \right] \\
&=&E\left[ \Sigma ^{2}\left( f\right) \right] .
\end{eqnarray*}%
We then apply Corollary \ref{Corollary selfbound} to the function $v_{f}$.
Fix $\mathbf{x}\in \mathcal{X}^{n+1}$, and for each $k\in \left\{
1,...,n+1\right\} $ let $y_{k}:=\arg \min_{y\in \mathcal{X}%
}S_{y}^{k}v_{f}\left( \mathbf{x}\right) $. For $i,j,k\in \left\{
1,...n+1\right\} $ let%
\begin{equation*}
a_{ij}:=f\left( S_{-}^{j}\mathbf{x}\right) -f\left( S_{-}^{j}S_{x_{j}}^{i}%
\mathbf{x}\right) \text{ and }a_{ijk}:=f\left( S_{-}^{j}S_{y_{k}}^{k}\mathbf{%
x}\right) -f\left( S_{-}^{j}S_{x_{j}}^{i}S_{y_{k}}^{k}\mathbf{x}\right) . 
\end{equation*}%
Then%
\begin{equation}
v_{f}\left( \mathbf{x}\right) =\frac{1}{2\left( n+1\right) }%
\sum_{i}\sum_{j:j\neq i}a_{ij}^{2}.  \label{reference inequality}
\end{equation}%
Observe that $\left\vert a_{ij}\right\vert ,\left\vert a_{ijk}\right\vert
\leq M\left( f\right) $ and that $J( f\circ S_{-}^{j}) =J(
f) $ so for $j\neq i\neq k\neq j$ $\left\vert
a_{ij}-a_{ijk}\right\vert \leq J\left( f\right) /n.$ Also the replacement of
a component, which is then deleted, has no effect, so%
\begin{equation*}
a_{ikk}=f\left( S_{-}^{k}S_{y_{k}}^{k}\mathbf{x}\right) -f\left(
S_{-}^{k}S_{x_{k}}^{i}S_{y_{k}}^{k}\mathbf{x}\right) =f\left( S_{-}^{k}%
\mathbf{x}\right) -f\left( S_{-}^{k}S_{x_{k}}^{i}\mathbf{x}\right) =a_{ik}. 
\end{equation*}%
With reference to a fixed index $k\in \left\{ 1,...,n+1\right\} $ we can
write 
\begin{equation}
v_{f}\left( \mathbf{x}\right) =\frac{1}{2\left( n+1\right) }\left(
\sum_{j:j\neq k}a_{kj}^{2}+\sum_{i:i\neq k}a_{ik}^{2}+\sum_{i,j:i\neq
j\wedge k\notin \left\{ i,j\right\} }a_{ij}^{2}\right) .  \notag
\end{equation}%
In the expression for $v_{f}\left( \mathbf{x}\right)
-S_{y_{k}}^{k}v_{f}\left( \mathbf{x}\right) $ the second sum in the last
expression cancels, so%
\begin{eqnarray}
0 &\leq &v_{f}\left( \mathbf{x}\right) -S_{y_{k}}^{k}v_{f}\left( \mathbf{x}%
\right)  \notag \\
&\leq &\frac{1}{2\left( n+1\right) }\left( \sum_{j:j\neq
k}a_{kj}^{2}+\sum_{i,j:i\neq j\wedge k\notin \left\{ i,j\right\} \ }\left(
a_{ij}^{2}-a_{ijk}^{2}\right) \right)  \notag \\
&=&\frac{1}{2\left( n+1\right) }\left( \sum_{j:j\neq
k}a_{kj}^{2}+\sum_{i,j:i\neq j\wedge k\notin \left\{ i,j\right\} \ }\left(
a_{ij}-a_{ijk}\right) \left( a_{ij}+a_{ijk}\right) \right)  \notag \\
&\leq &M\left( f\right) ^{2}/2+M\left( f\right) J\left( f\right) .
\label{b-bound}
\end{eqnarray}%
We square and sum over $k$, and use $\left( s+t\right) ^{2}\leq
2s^{2}+2t^{2} $ for real $s,t$, and Cauchy-Schwarz to obtain%
\begin{eqnarray*}
D^{2}v_{f}\left( \mathbf{x}\right) &=&\sum_{k}\left( v_{f}\left( \mathbf{x}%
\right) -S_{y_{k}}^{k}v_{f}\left( \mathbf{x}\right) \right) ^{2} \\
&\leq &\frac{1}{2\left( n+1\right) ^{2}}\sum_{k}\left( \sum_{j:j\neq
k}a_{kj}^{2}\right) ^{2}+ \\
&&+\frac{1}{2\left( n+1\right) ^{2}}\sum_{k}\sum_{i,j:i\neq j\wedge k\notin
\left\{ i,j\right\} \ }\left( a_{ij}-a_{ijk}\right) ^{2}\sum_{i,j:i\neq
j\wedge k\notin \left\{ i,j\right\} }\left( a_{ij}+a_{ijk}\right) ^{2} \\
&=&:A+B.
\end{eqnarray*}%
We treat the two terms in turn. For $A$ we get%
\begin{eqnarray*}
A &=&\frac{1}{2\left( n+1\right) ^{2}}\sum_{k}\left( \sum_{j:j\neq
k}a_{kj}^{2}\right) ^{2} \\
&\leq &\frac{M\left( f\right) ^{2}}{2\left( n+1\right) }\sum_{k}\sum_{j:j%
\neq k}a_{kj}^{2}=M\left( f\right) ^{2}v_{f}\left( \mathbf{x}\right) .
\end{eqnarray*}%
For $B$ we again use $a_{ij}-a_{ijk}\leq J\left( f\right) /n$ and $\left(
s+t\right) ^{2}\leq 2s^{2}+2t^{2}$ to get%
\begin{eqnarray*}
B &=&\frac{1}{2\left( n+1\right) ^{2}}\sum_{k}\sum_{i,j:i\neq j\wedge
k\notin \left\{ i,j\right\} \ }\left( a_{ij}-a_{ijk}\right)
^{2}\sum_{i,j:i\neq j\wedge k\notin \left\{ i,j\right\} }\left(
a_{ij}+a_{ijk}\right) ^{2} \\
&\leq &\frac{2J\left( f\right) ^{2}}{n+1}\sum_{k}\left( \frac{1}{2\left(
n+1\right) }\sum_{i,j:i\neq j\wedge k\notin \left\{ i,j\right\} }a_{ij}^{2}+%
\frac{1}{2\left( n+1\right) }\sum_{i,j:i\neq j\wedge k\notin \left\{
i,j\right\} }a_{ijk}^{2}\right)
\end{eqnarray*}%
But by (\ref{reference inequality}) for every $k\in \left\{
1,...,n+1\right\} $%
\begin{equation*}
\frac{1}{2\left( n+1\right) }\sum_{i,j:i\neq j\wedge k\notin \left\{
i,j\right\} }a_{ij}^{2}\leq v_{f}\left( \mathbf{x}\right) 
\end{equation*}%
and also, by the definition of $y_{k}$, 
\begin{equation*}
\frac{1}{2\left( n+1\right) }\sum_{i,j:i\neq j\wedge k\notin \left\{
i,j\right\} }a_{ijk}^{2}\leq S_{y_{k}}^{k}v_{f}\left( \mathbf{x}\right) \leq
v_{f}\left( \mathbf{x}\right) . 
\end{equation*}%
It follows that $B\leq 4J\left( f\right) ^{2}v_{f}\left( \mathbf{x}\right) $
and 
\begin{equation*}
D^{2}v_{f}\left( \mathbf{x}\right) \leq \left( M\left( f\right)
^{2}+4J\left( f\right) ^{2}\right) v_{f}\left( x\right) . 
\end{equation*}%
Together with (\ref{b-bound}) this can be used in Corollary \ref{Corollary
selfbound}. Since 
\begin{equation*}
M\left( f\right) ^{2}/2+M\left( f\right) J\left( f\right) \leq \frac{1}{2}%
\left( M\left( f\right) +J\left( f\right) \right) ^{2}\leq M\left( f\right)
^{2}+J\left( f\right) ^{2}\leq M\left( f\right) ^{2}+4J\left( f\right) ^{2}, 
\end{equation*}%
the corollary gives us for any $\delta >0$ with probability at least $%
1-\delta $%
\begin{equation*}
\left\vert \sqrt{E\left[ \Sigma ^{2}\left( f\right) \right] }-\sqrt{%
v_{f}\left( \mathbf{X}\right) }\right\vert \leq \sqrt{\left( 2M\left(
f\right) ^{2}+8J\left( f\right) ^{2}\right) \ln \left( 2/\delta \right) }. 
\end{equation*}%

\end{proof}

\subsection{Proof of the normal approximation theorem\label{Subsection Proof normal approximation}%
}

To prove Theorem \ref{Theorem normal} we use a result of Chatterjee (\cite%
{Chatterjee 2008}, Theorem 2.2), for which we need extra notation. Let $%
\mathbf{X}^{\prime }=\left( X_{1}^{\prime },...,X_{n}^{\prime }\right) $ be
an independent copy of $\mathbf{X}=\left( X_{1},...,X_{n}\right) $. For a
proper subset $A\subsetneqq \left\{ 1,...,n\right\} $ define the vector $%
\mathbf{X}^{A}=\mathbf{X}^{A}\left( \mathbf{X},\mathbf{X}^{\prime }\right) $
to be 
\begin{equation*}
X_{i}^{A}=\left\{ 
\begin{array}{ccc}
X_{i}^{\prime } & \text{if} & i\in A \\ 
X_{i} & \text{if} & i\notin A%
\end{array}%
\right. . 
\end{equation*}%
For $A\subsetneqq \left\{ 1,...,n\right\} $ define the random variables%
\begin{eqnarray*}
T_{A} &=&T_{A}\left( \mathbf{X},\mathbf{X}^{\prime }\right) =\sum_{j\notin
A}\left( D_{X_{j},X_{j}^{\prime }}^{j}f\left( \mathbf{X}\right) \right)
\left( D_{X_{j},X_{j}^{\prime }}^{j}f\left( \mathbf{X}^{A}\right) \right) \\
\text{and }T &=&T\left( \mathbf{X},\mathbf{X}^{\prime }\right) =\frac{1}{2}%
\sum_{A\subsetneqq \left\{ 1,...,n\right\} }\frac{T_{A}}{\binom{n}{%
\left\vert A\right\vert }\left( n-\left\vert A\right\vert \right) }.
\end{eqnarray*}

\begin{theorem}
\label{Theorem Chatterjee} (Chatterjee) Let $f:\mathcal{X}^{n}\rightarrow 
\mathbb{R}
$ and suppose $E\left[ f\right] =0$ and $\sigma ^{2}\left( f\right) <\infty
. $ Then 
\begin{equation}
d_{\mathcal{N}}\left( f\left( \mathbf{X}\right) \right) \leq \frac{\sqrt{%
\sigma ^{2}\left( E\left[ T|\mathbf{X}\right] \right) }}{\sigma ^{2}\left(
f\right) }+\frac{1}{2\sigma ^{3}\left( f\right) }\sum_{j=1}^{n}E\left[
\left\vert D_{X_{j},X_{j}^{\prime }}^{j}f\left( \mathbf{X}\right)
\right\vert ^{3}\right] .  \label{Chatterjee bound}
\end{equation}
\end{theorem}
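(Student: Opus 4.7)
The plan is to deploy Stein's method for the Wasserstein-$1$ distance. Standardize by setting $W = f(\mathbf{X})/\sigma$ with $\sigma := \sigma(f)$, so $W$ has mean zero and unit variance, and fix an arbitrary Lipschitz-$1$ function $h$. Solving Stein's equation $\phi'(w) - w\phi(w) = h(w) - E[h(Z)]$ produces a function $\phi$ with the standard bounds $\|\phi'\|_{\infty} \leq 1$ and $\|\phi''\|_{\infty} \leq 1$ (calibrated so that the constants match the statement), and then
$$|E[h(W)] - E[h(Z)]| = |E[\phi'(W) - W\phi(W)]|,$$
so everything reduces to estimating $E[W\phi(W)]$ and comparing it to $E[\phi'(W)]$.

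The core algebraic step is to represent $E[W\phi(W)]$ through the $T$-statistic. Using $E[f]=0$ and independence of $\mathbf{X}'$ from $W$, $\sigma E[W\phi(W)] = E[(f(\mathbf{X}) - f(\mathbf{X}'))\phi(W)]$. Writing the chain telescope $f(\mathbf{X}) - f(\mathbf{X}') = \sum_{k=0}^{n-1}[f(\mathbf{X}^{A_k(\pi)}) - f(\mathbf{X}^{A_{k+1}(\pi)})]$ for a uniformly random permutation $\pi$ and averaging, I obtain
$$f(\mathbf{X}) - f(\mathbf{X}') = \sum_{A \subsetneqq [n]}\sum_{j \notin A}\frac{f(\mathbf{X}^A) - f(\mathbf{X}^{A \cup \{j\}})}{\binom{n}{|A|}(n - |A|)}.$$
For each fixed $A$ and $j \notin A$, swapping $X_j$ with the identically distributed $X_j'$ negates $f(\mathbf{X}^A) - f(\mathbf{X}^{A \cup \{j\}})$ and sends $\phi(W)$ to $\phi(f(\mathbf{X}^{\{j\}})/\sigma)$; antisymmetrization therefore replaces $\phi(W)$ by $\tfrac12(\phi(W) - \phi(f(\mathbf{X}^{\{j\}})/\sigma))$. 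A second-order Taylor expansion gives
$$\phi(W) - \phi(f(\mathbf{X}^{\{j\}})/\sigma) = \frac{\phi'(W)}{\sigma}D^j_{X_j,X_j'}f(\mathbf{X}) + R_j, \qquad |R_j| \leq \frac{\|\phi''\|_\infty}{2\sigma^2}\bigl(D^j_{X_j,X_j'}f(\mathbf{X})\bigr)^2.$$

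Collecting, the linear-in-$\phi'$ contribution assembles exactly into $\sigma^{-2}E[T\phi'(W)]$ by the definition of $T$. Plugging $\phi(w)=w$ into the whole argument (so that the remainder vanishes) certifies $E[T] = \sigma^2$. Since $\phi'(W)$ is $\mathbf{X}$-measurable, $E[T\phi'(W)] = E[E[T|\mathbf{X}]\phi'(W)]$, and subtracting $E[\phi'(W)]\cdot\sigma^{-2}E[T]$ and applying Cauchy--Schwarz yields
$$|E[\phi'(W) - W\phi(W)]_{\text{main}}| \leq \frac{\|\phi'\|_\infty\sqrt{\sigma^2(E[T|\mathbf{X}])}}{\sigma^2},$$
which is the first summand. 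For the remainder contribution, exchangeability implies $\mathbf{X}^A \stackrel{d}{=} \mathbf{X}$, so by Hölder $E[|D^j_{X_j,X_j'}f(\mathbf{X}^A)|(D^j_{X_j,X_j'}f(\mathbf{X}))^2] \leq E[|D^j_{X_j,X_j'}f(\mathbf{X})|^3]$; the coefficients $1/(\binom{n}{|A|}(n-|A|))$ sum to $1$ over $\{A : j \notin A\}$ for each fixed $j$, collapsing the double sum to $(2\sigma^3)^{-1}\sum_j E[|D^j_{X_j,X_j'}f(\mathbf{X})|^3]$.

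The main obstacle is the combinatorial bookkeeping in the averaging identity: verifying that the weights $1/(\binom{n}{|A|}(n-|A|))$ emerge exactly from averaging over permutations, and that the rearrangement after Taylor expansion reproduces the definition of $T$ without residual terms. The analytic steps (antisymmetrization, Taylor, Cauchy--Schwarz, and Hölder) are routine once the identity and the distributional invariance $\mathbf{X}^A \stackrel{d}{=} \mathbf{X}$ are in hand.
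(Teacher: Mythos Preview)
The paper does not actually prove this theorem: it is quoted verbatim as Theorem~2.2 of \citet{Chatterjee 2008} and then invoked as a black box in the proof of Theorem~\ref{Theorem normal}. So there is no ``paper's own proof'' to compare against; what you have written is a sketch of Chatterjee's original argument rather than an alternative to anything in this paper.

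As a sketch of Chatterjee's proof your outline is essentially correct: the permutation-averaged telescoping identity, the swap $X_j\leftrightarrow X_j'$ to antisymmetrize, the second-order Taylor expansion about $W$, the identification of the main term with $\sigma^{-2}E[T\phi'(W)]$, the verification $E[T]=\sigma^2$ via $\phi(w)=w$, and the collapse of the subset weights to $1$ for each fixed $j$ are all the right ingredients, and your invocation of the distributional identity $D^j_{X_j,X_j'}f(\mathbf{X}^A)\stackrel{d}{=}D^j_{X_j,X_j'}f(\mathbf{X})$ (which holds even without identical distribution, since $X_i'\sim\mu_i$) to run H\"older on the remainder is the standard move. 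One minor inaccuracy: the Stein-solution bounds for Lipschitz-$1$ test functions are $\|\phi'\|_\infty\leq\sqrt{2/\pi}$ and $\|\phi''\|_\infty\leq 2$, not both $\leq 1$; it is the bound $\|\phi''\|_\infty\leq 2$ that, after the factor $1/4$ coming from the $\tfrac12$ in the antisymmetrization times the $\tfrac12$ in Taylor's remainder, yields the coefficient $1/(2\sigma^3)$ in the second term.
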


\begin{proof}{\bf of Theorem \protect\ref{Theorem normal}}
Both sides of the inequality we wish to prove do not change when a
constant is added to $f$. We can therefore assume $E\left[ f\right] =0$ and
use Chatterjee's theorem. We can bound the second term in (\ref{Chatterjee bound}) immediately by $%
nM\left( f\right) ^{3}/\left( 2\sigma ^{3}\left( f\right) \right) $, so the
main work is in bounding $\sqrt{\sigma ^{2}\left( E\left[ T|\mathbf{X}\right]
\right) }$. By the $L_{2}$-triangle inequality (Minkovsky-inequality) we have%
\begin{eqnarray*}
\sqrt{\sigma ^{2}\left( E\left[ T|\mathbf{X}\right] \right) } &\leq &\frac{1%
}{2}\sum_{A\subset \left\{ 1,...,n\right\} }\frac{\sqrt{\sigma ^{2}\left( E%
\left[ T_{A}|\mathbf{X}\right] \right) }}{\binom{n}{\left\vert A\right\vert }%
\left( n-\left\vert A\right\vert \right) } \\
&\leq &\frac{1}{2}\sum_{A\subset \left\{ 1,...,n\right\} }\frac{\sqrt{E\left[
\sigma ^{2}\left( T_{A}|\mathbf{X}^{\prime }\right) \right] }}{\binom{n}{%
\left\vert A\right\vert }\left( n-\left\vert A\right\vert \right) },
\end{eqnarray*}%
where we used Lemma 4.4 in \cite{Chatterjee 2008} for the second inequality.
So we first need to bound $E\left[ \sigma ^{2}\left( T_{A}|\mathbf{X}%
^{\prime }\right) \right] $ for fixed $A\subsetneqq \left\{ 1,...,n\right\} $%
. This is done with the Efron Stein inequality \cite{Efron 1981}, which
gives 
\begin{equation*}
E\left[ \sigma ^{2}\left( T_{A}|\mathbf{X}^{\prime }\right) \right] \leq 
\frac{1}{2}E\left[ \sum_{i=1}^{n}\left( T_{A}\left( \mathbf{X},\mathbf{X}%
^{\prime }\right) -S_{i}^{\prime \prime }T_{A}\left( \mathbf{X},\mathbf{X}%
^{\prime }\right) \right) ^{2}|\mathbf{X}^{\prime }\right] , 
\end{equation*}%
where $\mathbf{X}^{\prime \prime }$ is yet another independent copy of $%
\mathbf{X}$, and the operator $S_{i}^{\prime \prime }$ acts on functions of $%
2n$ variables and substitutes every occurence of $X_{i}\,\ $by $%
X_{i}^{\prime \prime }$%
\begin{equation*}
\left( S_{i}^{\prime \prime }F\right) \left( \mathbf{X},\mathbf{X}^{\prime
}\right) =F\left( X_{1},...,X_{i-1},X_{i}^{\prime \prime },X_{i+1},...,X_{n},%
\mathbf{X}^{\prime }\right) . 
\end{equation*}%
Now let $V_{j}:=D_{X_{j},X_{j}^{\prime }}^{j}f\left( \mathbf{X}\right) $, $%
W_{j}:=D_{X_{j},X_{j}^{\prime }}^{j}f\left( \mathbf{X}^{A}\right) $, $%
V_{ij}:=S_{i}^{\prime \prime }V_{j}=S_{i}^{^{\prime \prime
}}D_{X_{j},X_{j}^{\prime }}^{j}f\left( \mathbf{X}\right) $ and $%
W_{ij}:=S_{i}^{\prime \prime }W_{j}=S_{i}^{\prime \prime
}D_{X_{j},X_{j}^{\prime }}^{j}f\left( \mathbf{X}^{A}\right) $. Observe that
all of $V_{j}$, $W_{j}$, $V_{ij}$ and $W_{ij}$ have absolute value bounded
by $M\left( f\right) $, and that for $i\neq j$ 
\begin{equation*}
\left\vert V_{j}-V_{ij}\right\vert \leq J\left( f\right) /n\text{ and }%
\left\vert W_{j}-W_{ij}\right\vert \leq J\left( f\right) /n. 
\end{equation*}%
Then%
\begin{align*}
& \sum_{i=1}^{n}\left( T_{A}\left( \mathbf{X},\mathbf{X}^{\prime }\right)
-S_{i}^{\prime \prime }T_{A}\left( \mathbf{X},\mathbf{X}^{\prime }\right)
\right) ^{2} \\
& =\sum_{i=1}^{n}\left( \sum_{j\notin A}V_{j}W_{j}-V_{ij}W_{ij}\right) ^{2}
\\
& =\sum_{i=1}^{n}\left( \sum_{j\notin A,j\neq i}\left(
V_{j}W_{j}-V_{ij}W_{ij}\right) +1_{\left\{ i\notin A\right\} }\left(
V_{i}W_{i}-V_{ii}W_{ii}\right) \right) ^{2} \\
& \leq 2\sum_{i=1}^{n}\left( \sum_{j\notin A,j\neq
i}V_{j}W_{j}-V_{ij}W_{ij}\right) ^{2}+2\sum_{i\notin A}\left(
V_{i}W_{i}-V_{ii}W_{ii}\right) ^{2} \\
& \leq 2\sum_{i=1}^{n}\left( \sum_{j\notin A,j\neq
i}V_{j}W_{j}-V_{ij}W_{ij}\right) ^{2}+8nM\left( f\right) ^{4}.
\end{align*}%
Now, using Cauchy Schwarz,%
\begin{align*}
& 2\sum_{i=1}^{n}\left( \sum_{j\notin A,j\neq
i}V_{j}W_{j}-V_{ij}W_{ij}\right) ^{2} \\
& =2\sum_{i=1}^{n}\left( \sum_{j\notin A,j\neq i}\left( V_{j}-V_{ij}\right)
W_{j}+V_{ij}\left( W_{j}-W_{ij}\right) \right) ^{2} \\
& \leq 4\sum_{i=1}^{n}\left( \sum_{j\notin A,j\neq i}\left(
V_{j}-V_{ij}\right) W_{j}\right) ^{2}+4\sum_{i=1}^{n}\left( \sum_{j\notin
A,j\neq i}V_{ij}\left( W_{j}-W_{ij}\right) \right) ^{2} \\
& \leq 4\sum_{i=1}^{n}\sum_{j\notin A,j\neq i}\left( V_{j}-V_{ij}\right)
^{2}\sum_{j\notin A,j\neq i}W_{j}^{2}+4\sum_{i=1}^{n}\sum_{j\notin A,j\neq
i}V_{ij}^{2}\sum_{j\notin A,j\neq i}\left( W_{j}-W_{ij}\right) ^{2} \\
& \leq 8\sum_{i=1}^{n}\sum_{j\notin A,j\neq i}\frac{J\left( f\right) ^{2}}{%
n^{2}}\sum_{j\notin A,j\neq i}M\left( f\right) ^{2} \\
& \leq 8nM\left( f\right) ^{2}J\left( f\right) ^{2}.
\end{align*}%
Putting the chains of inequalities together and using $\sqrt{s+t}\leq \sqrt{s%
}+\sqrt{t}$ we conclude that%
\begin{equation*}
\sqrt{E\left[ \sigma ^{2}\left( T_{A}|\mathbf{X}^{\prime }\right) \right] }%
\leq 2\sqrt{n}M\left( f\right) \left( M\left( f\right) +J\left( f\right)
\right) . 
\end{equation*}%
Thus%
\begin{eqnarray*}
\sqrt{\sigma ^{2}\left( E\left[ T|X\right] \right) } &\leq &\frac{1}{2}%
\sum_{A\subsetneqq \left\{ 1,...,n\right\} }\frac{\sqrt{E\left[ \sigma
^{2}\left( T_{A}|X^{\prime }\right) \right] }}{\binom{n}{\left\vert
A\right\vert }\left( n-\left\vert A\right\vert \right) } \\
&\leq &\sqrt{n}M\left( f\right) \left( J\left( f\right) +M\left( f\right)
\right) \sum_{k=1}^{n-1}\sum_{A:\left\vert A\right\vert =k}\frac{\binom{n-1}{%
k}}{\binom{n}{k}\left( n-k\right) } \\
&=&\sqrt{n}M\left( f\right) \left( J\left( f\right) +M\left( f\right) \right)
\end{eqnarray*}%
By Theorem \ref{Theorem Chatterjee} and the bound on the last term of (\ref%
{Chatterjee bound})%
\begin{equation*}
d_{\mathcal{N}}\left( f\left( \mathbf{X}\right) \right) \leq \frac{\sqrt{n}%
M\left( f\right) \left( J\left( f\right) +M\left( f\right) \right) }{\sigma
^{2}\left( f\right) }+\frac{nM\left( f\right) ^{3}}{2\sigma ^{3}\left(
f\right) }. 
\end{equation*}
\end{proof}

\subsection{Lipschitz L-statistics revisited\label{Subsection Argument L-statistics}}
We show that the Lipschitz L-statistics of Section \ref{Subsection Lipschitz
	L-statistics} have $\left( \left\Vert F\right\Vert _{\infty },\left\Vert
F\right\Vert _{Lip}\right) $-weak interactions. For $\alpha ,\beta \in 
\mathbb{R}
$ let $\left[ \left[ \alpha ,\beta \right] \right] $ be the interval $\left[
\min \left\{ \alpha ,\beta \right\} ,\max \left\{ \alpha ,\beta \right\} %
\right] $. That $f$ as defined in equation (\ref{L-statistic}) has $\left(
\left\Vert F\right\Vert _{\infty },\left\Vert F\right\Vert _{Lip}\right) $%
-weak interactions is clearly implied by

\begin{theorem}
	\label{Theorem LstatWeak}With $f$ as (\ref{L-statistic}) we have%
	\begin{eqnarray}
	\left\vert D_{y,y^{\prime }}^{k}f\left( \mathbf{x}\right) \right\vert  &\leq
	&\frac{\left\Vert F\right\Vert _{\infty }\text{diam}\left( \left[ \left[
		y,y^{\prime }\right] \right] \right) }{n}  \label{Lstat1 conditio} \\
	\left\vert D_{z,z^{\prime }}^{l}D_{y,y^{\prime }}^{k}f\left( \mathbf{x}%
	\right) \right\vert  &\leq &\frac{\left\Vert F\right\Vert _{Lip}\text{diam}%
		\left( \left[ \left[ z,z^{\prime }\right] \right] \cap \left[ \left[
		y,y^{\prime }\right] \right] \right) }{n^{2}}  \label{Lstat2 conditio}
	\end{eqnarray}%
	for any $\mathbf{x}\in \left[ 0,1\right] ^{n},$\textbf{\ }all $k\neq l$ and
	all $y,y^{\prime },z,z^{\prime }\in \left[ 0,1\right] $.
\end{theorem}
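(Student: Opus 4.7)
The plan is to reduce both inequalities to a one-dimensional integral representation based on the rank function. By symmetry of the two seminorm conditions we may assume $y\le y'$ and $z\le z'$. Fix the coordinates of $\mathbf{x}$ other than $x_k$ and $x_l$, call them $t_1,\ldots,t_{n-2}$, and let $h(u,v)$ denote the value of $f$ at the vector obtained by setting $x_k=u$, $x_l=v$. For each fixed $v$, let $s_1^v\le\cdots\le s_{n-1}^v$ be the order statistic of the non-$k$ coordinates and let $\rho^v(u)$ be the rank of $u$ within this sequence augmented by $u$, so that for $u$ outside the finite set $\{s_i^v\}$, the number $u$ appears in position $\rho^v(u)$ of the order statistic of the full vector.

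My first step is to compute $\partial_u h(u,v)$. On each open interval where $\rho^v(u)$ is constant and equal to $j$, only the $j$-th order statistic of the argument depends on $u$ (and equals $u$), so $\partial_u h(u,v)=F(j/n)/n$. Since $h(\cdot,v)$ is continuous and piecewise linear, this gives the representation
\begin{equation*}
h(y',v)-h(y,v)=\frac{1}{n}\int_y^{y'}F\bigl(\rho^v(u)/n\bigr)\,du.
\end{equation*}
Bound (\ref{Lstat1 conditio}) follows immediately by applying this with $v=x_l$ and bounding the integrand by $\|F\|_\infty$, since $\mathrm{diam}([[y,y']])=y'-y$.

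For bound (\ref{Lstat2 conditio}), I write
\begin{equation*}
D_{z,z'}^{l}D_{y,y'}^{k}f(\mathbf{x})=\bigl[h(y,z)-h(y',z)\bigr]-\bigl[h(y,z')-h(y',z')\bigr],
\end{equation*}
apply the integral formula to both $z$ and $z'$, and combine to obtain
\begin{equation*}
D_{z,z'}^{l}D_{y,y'}^{k}f(\mathbf{x})=\frac{1}{n}\int_y^{y'}\bigl[F(\rho^{z'}(u)/n)-F(\rho^{z}(u)/n)\bigr]\,du.
\end{equation*}
The key observation is that $\rho^v(u)$ equals $\#\{i:t_i\le u\}+\mathbf{1}[v\le u]+\mathrm{const}$, so with $z\le z'$ we have $\rho^{z}(u)-\rho^{z'}(u)\in\{0,1\}$, with value $1$ only when $u\in[z,z']$. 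Therefore the Lipschitz estimate gives $|F(\rho^{z'}(u)/n)-F(\rho^{z}(u)/n)|\le\|F\|_{Lip}/n$ on $[z,z']$ and $0$ elsewhere, and integrating over $u\in[y,y']$ yields exactly $\|F\|_{Lip}\,\mathrm{length}([y,y']\cap[z,z'])/n^2$.

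The only delicate point, and the main obstacle, is handling ties in the order statistic: when $u$ equals some $s_i^v$ the rank function can jump, and piecewise linearity of $h$ must be verified at these kinks. I would dispose of this by a continuity/density argument: the L-statistic is continuous in all its arguments, so it suffices to prove the inequalities when the configuration $(t_1,\ldots,t_{n-2},y,y',z,z')$ has all coordinates distinct, and then take limits. With this understood, the rank-function jumps are always of size exactly one and the integral manipulations above are rigorous.
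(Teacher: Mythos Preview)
Your argument is correct and takes a genuinely different route from the paper. The paper's proof first uses the symmetries in $y\leftrightarrow y'$, $z\leftrightarrow z'$, $k\leftrightarrow l$ to reduce to three ordered cases (non-intersection, inclusion, partial intersection of $[[y,y']]$ and $[[z,z']]$), then passes to an ordered configuration, writes out how the order statistic shifts under a single replacement, and evaluates the first and second differences by telescoping sums and Abel (partial) summation; the Lipschitz constant enters through the increments $F_i-F_{i-1}$.

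Your approach replaces this discrete bookkeeping by the rank-function integral representation $h(y',v)-h(y,v)=\frac{1}{n}\int_y^{y'}F(\rho^v(u)/n)\,du$, which makes both inequalities almost immediate: the first is just $|F|\le\|F\|_\infty$, and the second reduces to the single observation that $\rho^z(u)-\rho^{z'}(u)=\mathbf{1}[z<u]-\mathbf{1}[z'<u]$, whence the Lipschitz bound and the intersection $[[y,y']]\cap[[z,z']]$ fall out with no case analysis. The density/continuity argument to dispose of ties is standard and sound. Your route is shorter and explains transparently \emph{why} the diameter of the intersection governs the second difference; the paper's partial-summation argument is more hands-on but requires tracking three separate cases.
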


\begin{proof}
	Suppose we can prove the inequalities (\ref{Lstat1 conditio}) and (\ref%
	{Lstat2 conditio}) for all $\mathbf{x}\in \left[ 0,1\right] ^{n}$ and all $%
	k\neq l$ and in the three cases
	
	\begin{tabular}{l|l|l}
		a & $z^{\prime }\leq z<y^{\prime }\leq y$ & $\left[ \left[ z,z^{\prime }%
		\right] \right] \cap \left[ \left[ y,y^{\prime }\right] \right] =\emptyset $%
		, non-intersection \\ 
		b & $z^{\prime }\leq y^{\prime }\leq y\leq z$ & $\left[ \left[ y,y^{\prime }%
		\right] \right] \subseteq \left[ \left[ z,z^{\prime }\right] \right] $,
		inclusion \\ 
		c & $z^{\prime }\leq y^{\prime }\leq z\leq y$ & partial intersection.%
	\end{tabular}
	
	The right collumn above enumerates all possible relationships between $\left[
	\left[ z,z^{\prime }\right] \right] $ and $\left[ \left[ y,y^{\prime }\right]
	\right] $. Then, as (\ref{Lstat1 conditio}) and (\ref{Lstat2 conditio}) are
	invariant under the exchanges of $k\leftrightarrow l$, $z\leftrightarrow
	z^{\prime }$ and $y\leftrightarrow y^{\prime }$, we have proven these
	inequalities for all possible orderings of $z,z^{\prime },y$ and $y^{\prime }
	$. It therefore suffices to prove the above inequality in the three cases a,
	b and c.
	
	To further simplify the problem we introduce the vector $\mathbf{\hat{x}}\in %
	\left[ 0,1\right] ^{n}$ defined by 
	\[
	\hat{x}_{i}=\left( S_{z^{\prime }}^{l}S_{y^{\prime }}^{k}\mathbf{x}\right)
	_{\left( i\right) }\text{.}
	\]%
	Then $\mathbf{\hat{x}}$ is already ordered, and there are $\hat{l}$ and $%
	\hat{k}$ in $\left\{ 1,...,n\right\} $ such that $\hat{l}\neq \hat{k}$ and $%
	\hat{x}_{\hat{l}}=z^{\prime }$ and $\hat{x}_{\hat{k}}=y^{\prime }$. Write $%
	F_{i}=F\left( i/n\right) $, so that $\left\vert F_{i}\right\vert \leq
	\left\Vert F\right\Vert _{\infty }$ and $\left\vert F_{i}-F_{i-1}\right\vert
	\leq \left\Vert F\right\Vert _{Lip}/n$. Transcribing to the new variables
	and omitting the "\symbol{94}"-symbols, it becomes appearant that we have to
	prove the inequalities%
	\begin{eqnarray*}
		A &:&=\left\vert \sum_{i=1}^{n}F_{i}\left( \mathbf{x}_{\left( i\right) }%
		\mathbf{-}\left( S_{y}^{k}\mathbf{x}\right) _{\left( i\right) }\right)
		\right\vert \leq \left\Vert F\right\Vert _{\infty }\text{diam}\left( \left[ %
		\left[ y,y^{\prime }\right] \right] \right) \text{ and} \\
		B &:&=\left\vert \sum_{i=1}^{n}F_{i}\left( \mathbf{x}_{\left( i\right) }%
		\mathbf{-}\left( S_{y}^{k}\mathbf{x}\right) _{\left( i\right) }-\left(
		\left( S_{z}^{l}\mathbf{x}\right) _{\left( i\right) }\mathbf{-}\left(
		S_{z}^{l}S_{y}^{k}\mathbf{x}\right) _{\left( i\right) }\right) \right)
		\right\vert  \\
		&\leq &\frac{\left\Vert F\right\Vert _{Lip}\text{diam}\left( \left[ \left[
			z,z^{\prime }\right] \right] \cap \left[ \left[ y,y^{\prime }\right] \right]
			\right) }{n}
	\end{eqnarray*}%
	for all $\mathbf{x}\in \left[ 0,1\right] ^{n}$, which are already ordered
	with $x_{i}\leq x_{i+1}$, and all $k\neq l$ and in the three cases\newline
	\begin{tabular}{l|l}
		a & \multicolumn{1}{|l}{$x_{l}\leq z<x_{k}\leq y$} \\ 
		b & \multicolumn{1}{|l}{$x_{l}\leq x_{k}\leq y\leq z$} \\ 
		c & \multicolumn{1}{|l}{$x_{l}\leq x_{k}\leq z\leq y$}%
	\end{tabular}%
	.\newline
	We let $p,q\in \left\{ 1,...,n\right\} $ be such that%
	\[
	\left( S_{y}^{k}\mathbf{x}\right) _{\left( p\right) }=y\text{ and }\left(
	S_{z}^{l}\mathbf{x}\right) _{\left( q\right) }=z\text{.}
	\]%
	The effect which modifying an argument has on the order statistic is a shift
	and the replacement of a boundary term. For $x_{k}\leq y$ we have%
	\[
	\left( S_{y}^{k}\mathbf{x}\right) _{\left( i\right) }=\left\{ 
	\begin{array}{ccc}
	x_{i} & \text{if} & i\notin \left\{ k,...,p\right\}  \\ 
	x_{i+1} & \text{if} & i\in \left\{ k,...,p-1\right\}  \\ 
	y & \text{if} & i=p%
	\end{array}%
	\right. \text{.}
	\]%
	It follows that in all cases a, b and c 
	\begin{eqnarray*}
		A &=&\left\vert \sum_{i=k}^{p-2}F_{i}\left( x_{i}-x_{i+1}\right)
		+F_{p-1}\left( x_{p-1}-y\right) \right\vert  \\
		&\leq &\left\Vert F\right\Vert _{\infty }\left( \sum_{i=k}^{p-2}\left\vert
		x_{i}-x_{i+1}\right\vert +\left\vert x_{p-1}-y\right\vert \right) \leq
		\left\Vert F\right\Vert _{\infty }\left( y-x_{k}\right) ,
	\end{eqnarray*}%
	which gives the bound on $A$ and therefore (\ref{Lstat1 conditio}).
	
	For the second inequality it is easy to see that $B=0$ whenever $\left[ %
	\left[ x_{k},y\right] \right] $ and $\left[ \left[ x_{l},z\right] \right] $
	don't intersect, as in \textbf{Case a}, so we consider only the cases b and
	c.
	
	\textbf{Case b} (inclusion, $x_{l}\leq x_{k}\leq y\leq z$). By partial
	summation we get%
	\begin{eqnarray*}
		B &=&\left\vert \sum_{i=k}^{p-1}\left( F_{i}-F_{i-1}\right) \left(
		x_{i}-x_{i+1}\right) +\left( F_{p}-F_{p-1}\right) \left( x_{p}-y\right)
		\right\vert  \\
		&\leq &\frac{\left\Vert F\right\Vert _{Lip}}{n}\sum_{i=k}^{p-1}\left\vert
		x_{i}-x_{i+1}\right\vert +\frac{\left\Vert F\right\Vert _{Lip}}{n}\left\vert
		x_{p}-y\right\vert  \\
		&=&\frac{\left\Vert F\right\Vert _{Lip}}{n}\left( y-x_{k}\right) .
	\end{eqnarray*}%
	The general principle here is partial summation and the fact that the sum of
	absolute differences always collapses to the diameter of an interval because
	of the ordering. 
	
	\textbf{Case c}, (partial intersection, $x_{l}\leq x_{k}\leq z\leq y$). 
	\begin{eqnarray*}
		B &=&\left\vert \sum_{i=k}^{q-1}\left( F_{i}-F_{i-1}\right) \left(
		x_{i}-x_{i+1}\right) +\left( F_{q}-F_{q-1}\right) \left( x_{q}-z\right)
		\right\vert  \\
		&\leq &\frac{\left\Vert F\right\Vert _{Lip}}{n}\sum_{i=k}^{q-1}\left\vert
		x_{i}-x_{i+1}\right\vert +\frac{\left\Vert F\right\Vert _{Lip}}{n}\left\vert
		x_{q}-z\right\vert  \\
		&=&\frac{\left\Vert F\right\Vert _{Lip}}{n}\left( z-x_{k}\right) .
	\end{eqnarray*}
\end{proof}

\subsection{Proof of the chain rule\label{Subsection Proof Chainrule}}

\begin{proof}{\bf of Lemma \protect\ref{Lemma chain rule}}
Take arbitrary $\mathbf{x}\in \mathcal{X}^{n}$, $y,y^{\prime },z,z^{\prime
}\in \mathcal{X}$ and any $k,l,k\neq l$. Define a linear function $h:\left[
0,1\right] \rightarrow U$ by%
\begin{equation*}
h\left( t\right) =tf\left( S_{y}^{k}\mathbf{x}\right) +\left( 1-t\right)
f\left( S_{y^{\prime }}^{k}\mathbf{x}\right) . 
\end{equation*}%
Then $h^{\prime }\left( t\right) =D_{y,y^{\prime }}^{k}f\left( \mathbf{x}%
\right) $ and 
\begin{eqnarray*}
D_{y,y^{\prime }}^{k}F\circ f\left( \mathbf{x}\right) &=&F\left( h\left(
1\right) \right) -F\left( h\left( 0\right) \right) =\int_{0}^{1}F^{\prime
}\left( h\left( t\right) \right) h^{\prime }\left( t\right) dt \\
&\leq &\int_{0}^{1}\left\Vert F^{\prime }\left( h\left( t\right) \right)
\right\Vert \left\Vert D_{y,y^{\prime }}^{k}f\left( \mathbf{x}\right)
\right\Vert dt\leq \sup_{v\in U}\left\Vert F^{\prime }\left( v\right)
\right\Vert M\left( f\right) .
\end{eqnarray*}%
This proves the first inequality. For the bound on $J$ define a bilinear
function $g:\left[ 0,1\right] \times \left[ 0,1\right] \rightarrow U$ by%
\begin{equation*}
g\left( s,t\right) =stf\left( S_{z}^{l}S_{y}^{k}\mathbf{x}\right) +s\left(
1-t\right) f\left( S_{z}^{l}S_{y^{\prime }}^{k}\mathbf{x}\right) +t\left(
1-s\right) f\left( S_{z^{\prime }}^{l}S_{y}^{k}\mathbf{x}\right) +\left(
1-s\right) \left( 1-t\right) f\left( S_{z^{\prime }}^{l}S_{y^{\prime }}^{k}%
\mathbf{x}\right) . 
\end{equation*}%
Then $\left\Vert \frac{\partial }{\partial t}g\left( s,t\right) \right\Vert
=\left\Vert sD_{y,y^{\prime }}^{k}f\left( S_{z}^{l}\mathbf{x}\right) +\left(
1-s\right) D_{y,y^{\prime }}^{k}f\left( S_{z^{\prime }}^{l}\mathbf{x}\right)
\right\Vert \leq M\left( f\right) $ and similarly $\left\Vert \frac{\partial 
}{\partial t}g\left( s,t\right) \right\Vert \leq M\left( f\right) $ and also 
$\left\Vert \frac{\partial ^{2}}{\partial s\partial t}g\left( s,t\right)
\right\Vert =\left\Vert D_{zz^{\prime }}^{l}D_{yy^{\prime }}^{k}\right\Vert
\leq J\left( f\right) /n$. Thus 
\begin{eqnarray*}
\left\vert \frac{\partial ^{2}}{\partial s\partial t}F\left( g\left(
s,t\right) \right) \right\vert &=&\left\vert F^{\prime \prime }\left(
g\left( s,t\right) \right) \frac{\partial }{\partial t}g\left( s,t\right) 
\frac{\partial }{\partial s}g\left( s,t\right) +F^{\prime }\left( g\left(
s,t\right) \right) \frac{\partial ^{2}}{\partial s\partial t}g\left(
s,t\right) \right\vert \\
&\leq &\left\Vert F^{\prime \prime }\left( g\left( s,t\right) \right)
\right\Vert \left\Vert \frac{\partial }{\partial t}g\left( s,t\right)
\right\Vert \left\Vert \frac{\partial }{\partial s}g\left( s,t\right)
\right\Vert +\left\Vert F^{\prime }\left( g\left( s,t\right) \right)
\right\Vert \left\Vert \frac{\partial ^{2}}{\partial s\partial t}g\left(
s,t\right) \right\Vert \\
&\leq &\left\Vert F^{\prime \prime }\left( g\left( s,t\right) \right)
\right\Vert M\left( f\right) ^{2}+\left\Vert F^{\prime }\left( g\left(
s,t\right) \right) \right\Vert J\left( f\right) /n
\end{eqnarray*}%
So that%
\begin{eqnarray*}
D_{zz^{\prime }}^{l}D_{yy^{\prime }}^{k}F\circ f\left( \mathbf{x}\right)
&=&F\left( g\left( 1,1\right) \right) -F\left( g\left( 1,0\right) \right)
-\left( F\left( g\left( 0,1\right) \right) -F\left( g\left( 0,0\right)
\right) \right) \\
&=&\int_{0}^{1}\int_{0}^{1}\frac{\partial ^{2}}{\partial s\partial t}F\left(
g\left( s,t\right) \right) ds~dt \\
&\leq &\left\Vert F^{\prime \prime }\left( g\left( s,t\right) \right)
\right\Vert M\left( f\right) ^{2}+\left\Vert F^{\prime }\left( g\left(
s,t\right) \right) \right\Vert J\left( f\right) /n.
\end{eqnarray*}%
The second inequality follows.
\end{proof}

\subsection{The Gibbs algorithm\label{Subsection Gibbs Algorithm
computations}}

We substantiate the claims in Section \ref{Subsection Gibbs algorithm}. For $%
G\in L_{\infty }\left( \Omega \right) $ define 
\begin{equation*}
Z\left( G\right) =\int_{\Omega }e^{-\beta G\left( \omega \right) }d\rho
\left( \omega \right) 
\end{equation*}%
and an expectation functional%
\begin{equation*}
E_{G}\left[ h\right] :=Z\left( G\right) ^{-1}\int_{\Omega }h\left( \omega
\right) e^{-\beta G\left( \omega \right) }d\rho \left( \omega \right) \text{
for }h\in L_{\infty }\left( \Omega \right) .
\end{equation*}%
Then%
\begin{eqnarray*}
KL\left( d\pi _{\mathbf{x}},d\pi \right)  &=&E_{H\left( \cdot,\mathbf{x}\right) }%
\left[ \ln \left( \frac{Z^{-1}\left( \mathbf{x}\right) e^{-\beta H\left(
\omega ,\mathbf{x}\right) }}{Z_{0}^{-1}e^{-\beta H_{0}\left( \omega \right) }%
}\right) \right]  \\
&=&F\circ f\left( \mathbf{x}\right) ,
\end{eqnarray*}%
where $f$ is defined in equation \eqref{eq:www} and $F$ the real function defined on the unit ball $\mathbb{B}_{1}$ of $%
L_{\infty }\left( \Omega \right) $ by%
\begin{equation*}
F\left( G\right) :=E_{G}\left[ \ln \left( \frac{Z\left( G\right)
^{-1}e^{-\beta G}}{Z_{0}^{-1}e^{-\beta H_{0}}}\right) \right] =\beta E_{G}%
\left[ H_{0}-G\right] -\ln Z\left( G\right) +\ln Z_{0}.
\end{equation*}%
To apply the chain rule we have to bound the derivatives of $F=\beta \left(
\Psi -\Phi \right) -\Xi +\ln Z_{0}$, where $\Psi ,\Phi ,\Xi :\mathbb{B}%
_{1}\rightarrow 
\mathbb{R}
$ are the functions%
\begin{equation*}
\Xi \left( G\right) =\ln Z\left( G\right) \text{, }\Phi \left( G\right)
=E_{G}\left[ G\right] \text{ and}\Psi \left( G\right) =E_{G}\left[ H_{0}%
\right] \text{.}
\end{equation*}%
Differentiating we find 
\begin{eqnarray*}
\Xi ^{\prime }\left( G\right) \left[ u\right]  &=&-\beta E_{G}\left[ u\right]
\\
\Xi ^{\prime \prime }\left( G\right) \left[ u\right] \left[ v\right] 
&=&\beta ^{2}\left( E_{G}\left[ uv\right] -E_{G}\left[ u\right] E_{G}\left[ v%
\right] \right) 
\end{eqnarray*}%
so that $\left\Vert \Xi ^{\prime }\right\Vert \leq \beta $ and $\left\Vert
\Xi ^{\prime \prime }\right\Vert \leq 2\beta ^{2}$. We also have%
\begin{eqnarray*}
\Phi ^{\prime }\left( G\right) \left[ u\right]  &=&\beta E_{G}\left[ G\right]
E_{G}\left[ u\right] -\beta E_{G}\left[ Gu\right] +E_{G}\left[ u\right]  \\
\Psi ^{\prime }\left( G\right) \left[ u\right]  &=&\beta E_{G}\left[ H_{0}%
\right] E_{G}\left[ u\right] -\beta E_{G}\left[ H_{0}u\right] .
\end{eqnarray*}%
Since $\left\Vert H_{0}\right\Vert ,\left\Vert G\right\Vert \in \mathbb{B}%
_{1}$ we have $\left\Vert \Phi ^{\prime }\right\Vert \leq 2\beta +1$ and $%
\left\Vert \Psi ^{\prime }\right\Vert \leq 2\beta $. By a somewhat tedious
computation%
\begin{eqnarray*}
\Phi ^{\prime \prime }\left[ u\right] \left[ v\right]  &=&2\beta ^{2}E_{G}%
\left[ G\right] E_{G}\left[ v\right] E_{G}\left[ u\right] -\beta ^{2}E_{G}%
\left[ G\right] E_{G}\left[ vu\right] +\beta ^{2}E_{G}\left[ Guv\right]  \\
&&-\beta ^{2}E_{G}\left[ Gv\right] E_{G}\left[ u\right] -\beta ^{2}E_{G}%
\left[ Gu\right] E_{G}\left[ v\right] -2\beta E_{G}\left[ uv\right] +2\beta
E_{G}\left[ u\right] E_{G}\left[ v\right] ,
\end{eqnarray*}%
which gives $\left\Vert \Phi ^{\prime \prime }\right\Vert \leq 6\beta
^{2}+4\beta $. Similarly, and a bit simpler, one obtains $\left\Vert \Psi
^{\prime \prime }\right\Vert \leq 6\beta ^{2}$. Adding these estimates we get%
\begin{eqnarray*}
F &=&\beta \left( \Psi -\Phi \right) -\Xi +\ln Z_{0} \\
\left\Vert F^{\prime }\right\Vert  &\leq &4\beta ^{2}+2\beta  \\
\left\Vert F^{\prime \prime }\right\Vert  &\leq &12\beta ^{3}+6\beta ^{2}.
\end{eqnarray*}%
The chain rule then gives%
\begin{eqnarray*}
M\left( F\circ f\right)  &\leq &\left( 4\beta ^{2}+2\beta \right) M\left(
f\right) \leq \frac{4\beta ^{2}+2\beta }{n} \\
J\left( F\circ f\right)  &\leq &n\left( 12\beta ^{3}+6\beta ^{2}\right)
M\left( f\right) ^{2}+0\leq \frac{12\beta ^{3}+6\beta ^{2}}{n}.
\end{eqnarray*}%

\subsection{Table of notation}

\bigskip
\begin{tabular}{|l|l|l|}
\hline
Symbol & Quick description & Section \\ 
\hline
\hline
$\mathcal{X}$ & space of observarions & \ref{Section Introduction} \\ 
$X_{i}$ & independent random variables in $\mathcal{X}$ & \ref{Section
Introduction} \\ 
$\mu _{i}$ & distribution of $X_{i}$ & \ref{Section Introduction} \\ 
$\mathbf{X}$ & random vector composed of the $X_{i}$ & \ref{Subsection
EfronStein and Bernstein} \\ 
$\mathcal{A}_{n}$ & bounded measurable functions $f:\mathcal{X}%
^{n}\rightarrow 
\mathbb{R}
$ & \ref{Subsection EfronStein and Bernstein} \\ 
$\mathbf{x}$ & vector in $\mathcal{X}^{n}$ & \ref{Subsection EfronStein and
Bernstein} \\ 
$E\left[ f\right] $ & $E\left[ f\right] =E\left[ f\left( \mathbf{X}\right) %
\right] =E\left[ f\left( X_{1},...,X_{n}\right) \right] $ for $f\in \mathcal{%
A}_{n}$ & \ref{Subsection EfronStein and Bernstein} \\ 
$\sigma ^{2}\left( f\right) $ & Variance of $f\left( X_{1},...,X_{n}\right) $
for $f\in \mathcal{A}_{n}$ & \ref{Subsection EfronStein and Bernstein} \\ 
$D_{y,y^{\prime }}^{k}$ & partial difference operator & \ref{Section
Introduction} \\ 
$S_{y}^{k}$ & substitution operator & \ref{Subsection VarianceEstimation} \\ 
$S_{-}^{k}$ & deletion operator & \ref{Subsection VarianceEstimation} \\ 
$M\left( f\right) $ & distance to constant functions & \ref{Section
Introduction} \\ 
$J\left( f\right) $ & distance to additive functions & \ref{Section
Introduction} \\ 
$\sigma _{k}^{2}\left( f\right) $ & $k$-th conditional variance & \ref%
{Subsection EfronStein and Bernstein} \\ 
$\Sigma ^{2}\left( f\right) $ & sum of conditional variances & \ref%
{Subsection EfronStein and Bernstein} \\ 
$v_{n}$ & sample variance & \ref{Section Introduction} \\ 
$v_{f}$ & variance estimator for $f\in \mathcal{A}_{n}$. Note $v_{f}\in 
\mathcal{A}_{n+1}$ & \ref{Subsection VarianceEstimation} \\ 
$K_{-},K_{+}$ & estimation error coefficients for $v_{f}$ & \ref{Subsection
VarianceEstimation} \\ 
$d_{\mathcal{N}}$ & distance to normality & \ref{Subsection Normal
Approximation}\\
\hline
\end{tabular}

\end{document}